\date{}
\newtheorem{assump}{Assumption}
\newcommand{\papertitle}{Optimal regret algorithm for \SBCOexp}
\newcommand{\SBCOexp}{Pseudo-1d Bandit Convex Optimization\xspace}
\newcommand{\SBCO}{\textsc{PBCO}\xspace}
\newcommand{\sbcalg}{\textsc{OptPBCO}\xspace}
\def \OGD{Online Gradient Descent for \SBCO\xspace}
\newcommand{\EXPBCO}{Kernelized Exponential Weights for \SBCO\xspace}
\newcommand{\f}{{\mathit{f}}}
\newcommand{\ex}{{\mathbb{E}}}
\newcommand{\loss}{{\mathit{\ell}}}
\newcommand{\pred}{{\mathit{g}}}
\newcommand{\regret}{{\mathcal{R}}}
\newcommand{\learner}{{\mathcal{A}}}
\newcommand{\predspace}{{\mathcal{G}}}
\newcommand{\actspace}{{\mathcal{W}}}
\newcommand{\E}{{\mathbb E}}
\newcommand{\N}{{\mathbb N}}
\newcommand{\R}{{\mathbb R}}
\newcommand{\cG}{{\mathcal G}}
\newcommand{\K}{{\textbf K}}
\newcommand{\bP}{{\textbf P}}
\newcommand{\bU}{{\textbf U}}
\newcommand{\cA}{{\mathcal A}}
\newcommand{\cD}{{\mathcal D}}
\newcommand{\cN}{{\mathcal N}}
\newcommand{\cX}{{\mathcal X}}
\newcommand{\cW}{{\mathcal W}}
\newcommand{\cB}{{\mathcal B}}
\newcommand{\cS}{{\mathcal S}}
\newcommand{\cR}{{\mathcal R}}
\newcommand{\cP}{{\mathcal P}}
\newcommand{\cQ}{{\mathcal Q}}
\newcommand{\tK}{\mathbf {K}}
\newcommand{\hf}{{\hat f}}
\newcommand{\w}{{\mathbf w}}
\newcommand{\tw}{{\tilde \w}}
\newcommand{\tf}{{\tilde f}}
\newcommand{\hl}{{\hat \ell}}
\newcommand{\bw}{{\bar \w}}
\newcommand{\x}{{\mathbf x}}
\newcommand{\z}{{\mathbf z}}
\newcommand{\p}{{\mathbf p}}
\newcommand{\q}{{\mathbf q}}
\newcommand{\bdelta}{{\boldsymbol \delta}}
\newcommand{\bsigma}{{\boldsymbol \sigma}}
\newcommand{\e}{\textbf{e}}
\newcommand{\1}{\mathbbm{1}}
\newtheorem{thm}{Theorem}%[section]
\newtheorem{lem}[thm]{Lemma}
\newtheorem{cor}[thm]{Corollary}
\newtheorem{defn}[thm]{Definition}
\newtheorem{rem}{Remark}
\newcommand{\red}[1]{\textcolor{black}{#1}}
\newcommand{\otilde}[1]{\tilde{O}\left(#1\right)}
\title{\bfseries \papertitle} 
\author{
Aadirupa Saha\thanks{Microsoft Research.} \and Nagarajan Natarajan\footnotemark[1] \and Praneeth Netrapalli\footnotemark[1] \and Prateek Jain\footnotemark[1] ~\thanks{The author is now at Google Research India. }
}
\begin{document}

% If your paper is accepted and the title of your paper is very long,
% the style will print as headings an error message. Use the following
% command to supply a shorter title of your paper so that it can be
% used as headings.
%
%\runningtitle{I use this title instead because the last one was very long}

% If your paper is accepted and the number of authors is large, the
% style will print as headings an error message. Use the following
% command to supply a shorter version of the authors names so that
% they can be used as headings (for example, use only the surnames)
%
%\runningauthor{Surname 1, Surname 2, Surname 3, ...., Surname n}
\maketitle

\begin{abstract}
We study online learning with bandit feedback (i.e. learner has access to only zeroth-order oracle) where cost/reward functions $\f_t$ admit a "pseudo-1d" structure, i.e. $\f_t(\w) = \loss_t(\pred_t(\w))$ where the output of $\pred_t$ is one-dimensional. At each round, the learner observes context $\x_t$, plays prediction $\pred_t(\w_t; \x_t)$ (e.g. $\pred_t(\cdot)=\langle \x_t, \cdot\rangle$) for some $\w_t \in \mathbb{R}^d$ and observes loss $\loss_t(\pred_t(\w_t))$ where $\loss_t$ is a convex Lipschitz-continuous function. The goal is to minimize the standard regret metric. This pseudo-1d bandit convex optimization problem  (\SBCO) arises frequently in domains such as online decision-making or parameter-tuning in large systems. For this problem, we first show a lower bound of $\min(\sqrt{dT}, T^{3/4})$ for the regret of any algorithm, where $T$ is the number of rounds. We propose a new algorithm \sbcalg that combines randomized online  gradient descent with a kernelized exponential weights method to exploit the pseudo-1d structure effectively, guaranteeing the {\em optimal} regret bound mentioned above, up to additional logarithmic factors. In contrast, applying state-of-the-art online convex optimization methods leads to $\tilde{O}\left(\min\left(d^{9.5}\sqrt{T},\sqrt{d}T^{3/4}\right)\right)$ regret, that is significantly suboptimal in $d$.
\end{abstract}

\section{Introduction}
\label{sec:intro}
%Often times online learning problems arising in practically important applications like system parameter-tuning or reinforcement learning have the following characteristics
%%Online learning problems that arise frequently in practice, such as decision-making or parameter-tuning in large systems, and in reinforcement learning, often have the following characteristics: 
%(a) the goal is to learn parameters of the system that can be very high-dimensional, (b) the reward/loss function is unknown, constantly changing, and can only be evaluated at a given point/context. This class of problems can be posed as online learning with bandit feedback where the learner plays $\w_t \in \mathbb{R}^d$ in each round, and the adversary responds with a cost/loss $\f_t(\w_t)$ for unknown functions $\f_t$ and the goal is to minimize $\sum_{t}\f_t(\w_t)$. The problem is notoriously challenging in general, but in some cases it admits efficient learning algorithms with provable convergence guarantees, e.g. online (bandit) convex optimization when $\f_t$s are convex. In this work, we consider the problem setting when there is a "pseudo-1d" structure in the loss functions $\f_t$, i.e. $\f_t(\w) = \loss_t(\langle \w, \cdot \rangle)$, for some unknown loss $\loss_t$.  More generally, in many applications/systems, the reward/loss is often computed on \textit{scalar} predictions of some underlying ML model which is \emph{known} to the learner, even though $\w$ is high-dimensional.

Online learning with bandit feedback is a cornerstone problem in the online learning literature and can be used to model a variety of practical systems where at each step $t$, the system takes an action $\w_t \in \mathbb{R}^d$ for which it incurs a loss of $\f_t(\w_t)$. Now, often times in practice, the action space has significantly more structure. For example, in large-scale parameter tuning the reward/loss is computed on a \textit{scalar} parameter predicted by an underlying ML model applied to the current context of system.  That is, the problem has a "pseudo-1d" structure in the loss functions $\f_t(\w) = \loss_t(\pred_t(\w; \x_t))$ where $g_t:\mathbb{R}^d \rightarrow \mathbb{R}$ is a one-dimensional function. %\langle \w, \cdot \rangle)$, for some unknown loss $\loss_t$. 

We formulate this \SBCOexp (in Section~\ref{sec:prob}) as follows: given a data point, or context, $\x_t \in \cX$ at round $t$, the prediction of the learner is given by $\pred_t(\w_t; \x_t)$ for some $\w_t \in \actspace \subseteq \mathbb{R}^{d}$ and \emph{known} $\pred_t$, e.g. $\pred_t(\w_t; \x_t) =\langle \w_t, \x_t \rangle$. The learner then receives $\loss_t(\pred_t(\w_t; \x_t))$ from the adversary for some unknown convex, Lipschitz-continuous loss $\loss_t$. The goal is to minimize regret, i.e. the excess cumulative loss suffered by the learner over the best, fixed, parameter $\w^* \in \actspace$ in hindsight.  As mentioned above, the pseudo-1d structure arises naturally in online parameter tuning/decision making where the goal is to learn the optimal parameters $\w$ that govern the system, which can be very high-dimensional, but the dynamic reward $\loss_t$ depends only on a one-dimensional action $\pred_t$ taken by the system based on parameters $\w$ and the observed context $\x_t$.

The problem is a special case of the standard bandit convex optimization for which the state-of-the-art methods have regret of $O(d^{9.5}\sqrt{T})$ \citep{Bubeck17} or $O(\sqrt{d}T^{3/4})$ \citep{Flaxman+04}. So, the key question we answer in this paper is if and when the pseudo-1d structure can help obtain learning algorithms with better sample complexity or regret guarantees. For example, can we design an algorithm that has the optimal $\sqrt{T}$ regret in terms of $T$, but its regret is completely independent of $d$? Note that in the full-information setting, i.e., when full access to $\loss_t$ is available, the standard Online Gradient Descent (OGD) \citep{Zink03}  indeed obtains the optimal $\sqrt{T}$ regret independent of $d$. 

Somewhat surprisingly, our first result (Theorem~\ref{thm:lb} in Section~\ref{sec:lb}) shows that, even though the problem has a pseudo-1d structure, in the worst case, any algorithm will suffer a regret of $\min(\sqrt{dT}, T^{3/4})$ after $T$ rounds. That is, for large $T$, optimal regret has to scale with $d$. 

So, the next natural question is if we can design an algorithm to achieve the optimal regret. We answer that question in affirmative (Theorem~\ref{thm:ub} in Section~\ref{sec:algo}) by designing an efficient algorithm that indeed achieves the optimal regret when the loss function $\loss_t$ is convex and Lipschitz. Our method critically utilizes the pseudo-1d structure to define the algorithm in two regimes: a) for $d \geq \sqrt{T}$, we present a modification of the randomized gradient descent method by \citet{Flaxman+04} to get the rate optimal in this regime, b) for $d \leq \sqrt{T}$ we exploit a kernelized exponential weighting scheme similar to that of \citet{Bubeck17} to again obtain the optimal rate in this regime. \red{A key contribution of our work is that exploiting the problem structure also greatly simplifies the analysis and the proofs become significantly clearer (presented in Section~\ref{sec:algo}, Lemma~\ref{thm:ubkexp}), and much more palatable, than the general $d$-dimensional analysis by~\citet{Bubeck17}}. 

Now, it is instructive to compare our results against those of contextual bandit (CB) algorithms as the high level goal of both the formulations is similar. But, there are certain key distinctions between the two formulations. CB formulations work with general loss/reward functions while we restrict our methods to {\em convex Lipschitz}  functions only. On the other hand, CB methods are designed in general for discrete action and policy space (see Remark~\ref{rem:ccb} in Section~\ref{sec:prob}) unlike pseudo-1d bandit formulation that handles continuous prediction/action  space and infinite policy space. %Finally, runtime dependence of CB methods on $d$ can be prohibitively large \cite{agarwal2014taming}, while our methods' computational complexity is linear wrt $d$.  

Finally, we present simulations in Section~\ref{sec:exp} that demonstrate the regret bounds on simple synthetic problems. Our contributions are summarized below:\\
1) A novel problem formulation that captures practical online learning scenarios with bandit feedback and structure in the reward/loss function.\\
2) A lower bound for the pseudo-1d bandit convex optimization problem -- in the worst case, any learning strategy suffers a regret of $O(\min(\sqrt{dT}, T^{3/4}))$.\\
3) A learning algorithm that is provably optimal, assuming the loss functions are convex and Lipschitz --- with a regret bound that matches the lower bound up to logarithmic factors. \\ \\ 
\textbf{Related Work.} \citet{Flaxman+04} initiated the study of bandit optimization for general convex functions and showed a regret guarantee of $O(\sqrt{d}T^{5/6})$ using online gradient-descent; with additional assumption of Lipschitzness, they improve the bound to $O(\sqrt{d}T^{3/4})$, and recently~\cite{Hazan16} and~\cite{Bubeck17} showed $\sqrt{T}$-regret (optimal in terms of $T$, but highly suboptimal in terms of $d$) using two different types of algorithms. Due to the fundamental nature of the problem, there is a long line of work in this space~\citep{Bubeck16multi,Chen18,Sahu18}, that look at certain types of losses (e.g. linear losses) \citep{Abernethy09,Yadkori+11}, different types of feedback (e.g. two-point feedback, as against one-point feeback in our work) \citep{Agarwal+11,Shamir17}, or different settings (stochastic vs adversarial) where improved regret bounds are possible \citep{Ghadimi,Shamir13,Mohri+16,sahatewari}. On the contrary, in the full information (online convex optimization) setting, where the gradient information of the loss function is known, \citet{Zink03} showed that online gradient descent achieves a regret of $O(\sqrt T)$ (which can be improved under additional assumptions~\citep{HazanAgKale07}). Contextual bandit learning has a vast literature and results focusing on finite/discrete action spaces (survey by~\citet{Bubeck12}). The state-of-the-art results for continuous action spaces (i.e. at each round, the learner receives context $\x_t$ and plays a value from $[0,1]$) is due to~\citet{krishnamurthy2019contextual,majzoubi2020efficient}; here, they work with a notion of ``smoothed'' regret, where each action is mapped to a smoothed action, and the learner also competes with a smoothed policy class (that maps context to action, akin to $\pred_t$). One key difference in the bandit learning literature is that typically there is no (or mild) assumption on the loss/reward function (See Remark~\ref{rem:ccb}).

\section{Problem Setup and Preliminaries}
\label{sec:prob}
%\textbf{Setup.} 
The standard online (bandit) convex optimization framework proceeds in rounds: at round $t$, the learner \textit{plays} $\w_t \in \actspace \subseteq \mathbb{R}^d$ and receives the incurred loss $\f_t(\w_t)$ as feedback, for some convex $\f_t$ chosen adversarially. The ``action space'' $\actspace$ is restricted to be a closed convex set with diameter $W = \max_{\w, \w' \in \actspace} \|\w - \w'\|_2$. The goal of the (possibly randomized) learner $\learner$ is to have a bounded regret compared to a fixed $\w^* \in \mathcal{W}$ in hindsight that achieves the least cumulative loss, i.e. to minimize the regret defined as:
\begin{equation}
\regret_T(\mathcal{\learner}) = \sum_{t=1}^T \ex\big[f_t(\w_t)\big] - \sum_{t=1}^T \f_t(\w^*),
\label{eqn:regret}
\end{equation}
where $\w^* = \arg \min_{\w \in \mathcal{W}} \sum_{t=1}^T \f_t(\w)$, and $\E[.]$ is wrt to any randomness in $\learner$. In our formulation, at each round, the learner receives context $\x_t \in \cX$ , chooses parameters $\w_t$ and plays its prediction $\pred_t(\w_t; \x_t)$, and receives loss for this prediction; the loss functions chosen by the adversary at each round satisfies:
\begin{equation}
\f_t(\w_t) = \loss_t(\pred_t(\w_t; \x_t)),
\label{eqn:structuredloss}
\end{equation}  
for some $\pred_t: \actspace \times \cX \to \predspace \subseteq \mathbb{R}$, and bounded convex and $L$-Lipschitz $\loss_t: \predspace \to [0,C]$. %Here, the learner is allowed to play in $\predspace$. 
Note that while the learner receives bandit feedback for $\loss_t$, it has complete knowledge of $\pred_t$, for example, $\pred_t(\cdot) = \langle \cdot, \x_t\rangle$. Thus, in particular, the learner has access to both zeroth- and first-order information for $\pred_t$ but only zeroth-order information for $\loss_t$. We refer to $\predspace$ as the prediction space. With this set up, we formally state the problem of interest below. 
\paragraph{\SBCOexp~(\SBCO):} Minimize \eqref{eqn:regret} where the functions $\f_t(.)$ admit the structure in~\eqref{eqn:structuredloss}, and $\loss_t, \x_t$ are chosen adversarially. 
\begin{rem}
\label{rem:1}
	Note that the goal is to minimize cumulative regret \eqref{eqn:regret} with respect to the best fixed $d$-dimensional parameter $\w^*$, though the learner plays in the prediction space $\predspace$ which is one-dimensional. 
\end{rem}

\begin{rem} [Applying bandit convex optimization]
\label{rem:bco} Ignoring the structure in~\eqref{eqn:structuredloss}, one can apply bandit convex optimization algorithms to \SBCO problem. The state-of-the-art result for online convex optimization with bandit feedback is by~\citet{Bubeck17}; using their algorithm gives a significantly sub-optimal regret bound of $O(d^{9.5}\sqrt{T})$. 
\end{rem}

\begin{rem} [Applying continuous contextual bandits] 
\label{rem:ccb} The recent work by~\citet{krishnamurthy2019contextual} provides optimal guarantees for contextual bandits with continuous actions (i.e. the learner plays an action from $[0,1]$ at each round). Applying their algorithm to our setting yields a ``smoothed'' regret (which is a weaker notion of regret, and not directly comparable to ours) of $O(T^{2/3}(Ld)^{1/3})$, where $L$ is Lipschitz constant of $\loss_t$. Note, however, that their guarantees apply to general losses and in particular do \emph{not} need convexity.
\end{rem}
In the (easier) setting of (bandit) stochastic convex optimization, there is a fixed unknown $\f(.)$ for which the learner obtains noisy evaluations. The goal is to minimize the expected value of the function, i.e., to bound: 
\begin{equation}
\bar{\regret}(\cA) := \min_{\w \in \actspace} \E_Z [\f(\w; Z) - \f(\w^*; Z)],
\label{eqn:bsco}
\end{equation}
where $\w^* = \arg \min_{\w \in \actspace} \E_Z [f(\w; Z)]$. Naturally, we can pose a stochastic version of the \SBCO problem where $\f$ admits the pseudo-1d structure.

\textbf{Notation.} Let $[n] = \{1,2, \ldots n\}$, for any $n \in \N$. For any $\delta> 0$, let $\cB_d(\delta)$ and $\cS_d(\delta)$ denote the ball and the surface of the sphere of radius $\delta$ in $d$ dimensions respectively.
Lower case bold letters denote vectors, upper case bold letters denote matrices.
%For any set $\cX \subseteq \R^d$, volume of $\cX$ is defined as $\textbf{Vol}(\cX): = \int_{\cX} d\x$. 
$\bP_{\cX,\|\cdot\|}(\x)$ denotes the nearest point projection of a point $\x \in \R^d$ on to set $\cX \subseteq \R^d$ with respect to norm $\|\cdot\|$, i.e. $\bP_\cX(\x):= \arg \min_{\z \in \cX}\|\x - \z\|$. %$\I_d$ denotes the $d \times d$ identity matrix
For any vector $\x \in \R^d$, $\|\x\|_2$ denotes the $\ell_2$ norm of vector $\x$. 
%and for any matrix $\bA \in \R^{m\times n}$, $\|\bA\|_2$ denotes the frobenius norm of matrix $\bA$. 
To be consistent with the literature, we will use $\f_t$ as a short-hand for $\loss_t(\pred_t(.))$ in this paper (as defined in \eqref{eqn:structuredloss}); and use $\pred_t(\w)$ as a short-hand for $\pred_t(\w; \x_t)$ when $\x_t$ is implicit from the context.

Below we give definitions that will be used in the remainder of the paper.

\textbf{(A1) Convexity:} 
For all $\w_1, \w_2 \in \cW$ and $\lambda \in [0,1]$, \\
\textbf{(i)} $\ell_t\big(\lambda g_t(\w_1) + (1-\lambda)g_t(\w_2)\big) \leq \lambda\ell_t(g_t(\w_1)) + (1-\lambda)\ell_t(g_t(\w_2))$ \\
\textbf{(ii)} $\f_t\big(\lambda \w_1 + (1-\lambda) \w_2\big) \leq \lambda\f_t(\w_1) + (1-\lambda)\f_t(\w_2)$.

\textbf{(A2) $L$-Lipschitzness:} 
For all $\w_1, \w_2 \in \cW$, $\|\ell_t(\pred_t(\w_1)) - \ell_t(\pred_t(\w_2))\|_2 \le L\|\pred_t(\w_1)-\pred_t(\w_2)\|_2$.

While we require the loss function to be convex, the learner can choose any bounded prediction function as stated below.

\textbf{(A3) Boundedness of $\pred_t$:} \textbf{(i)} $\pred_t \in \predspace =  [\alpha_\cW, \beta_\cW] \subseteq \mathbb{R}$, \textbf{(ii)} $\|\nabla_\w \pred_t(\w; \x)\| \leq D$, for all $\x \in \cX, \w \in \cW$. \red{Note A3(ii) implies $g_t$ is D-Lipschitz.}

\begin{rem}\label{rem:g}
Note that when $\pred_t$ is linear, i.e. $\pred_t(\w; \x) = \langle \w, \x \rangle$, then the above assumptions simplify: In particular, (a) \emph{\textbf{(A1) (i)}} $\iff$ \emph{\textbf{(A1) (ii)}}, (b) $\|\nabla_\w \pred_t(\w; \x)\| = \| \x \| \leq D$, where $D$ denotes the diameter of $\cX$, and $\pred_t \in [-DW, DW]$, where $W$ is the diameter of $\cW$.
%\blue{A: Perhaps we can state a more general result for [-DW,DW] whenever $g_t$ is convex and bounded gradient}
\end{rem}

All detailed proofs are provided in the supplementary (Appendix A).

%\red{How is this defined when $m > 1$?}

%\textbf{(A3) $\alpha$-Strong Convexity:} $\exists \alpha > 0$, s.t. for all $\w_1, \w_2 \in \cW$, $\ell_t(g_t(\w_1)) \ge \ell_t(g_t(\w_2)) + \nabla \ell_t(g_t(\w_1)) + \frac{\alpha}{2}\|g_t(\w_1)-g_t(\w_2)\|_2^2$
%
%\textbf{(A4) $\alpha$-Exp Concavity:} $\exp(-\alpha \ell_t)$ is concave, for some $\alpha > 0$.

%Detailed proofs are deferred to the supplementary material.
%\textbf{Notations.} $\cB_r$ denotes $\ell_2$-ball of radius $r$, and $\cB^\infty_r$ denotes $\ell_\infty$-ball of radius $r \,~ (r>0)$. We denote the prediction space by $\cW \subseteq \R^d$. %and instance space by $\cX \subseteq \R^d$.
\section{A lower bound for $\SBCO$}
\label{sec:lb}
It does appear that the $\SBCO$ problem introduced in Section~\ref{sec:prob} is effectively a one-dimensional problem because the loss function $\loss_t$ is computed on a scalar. This raises the natural question as to when and if one can get rid of dimension dependence in the regret. Recall that existing bandit convex optimization techniques (Remark~\ref{rem:bco} in Section~\ref{sec:prob}) do suffer poly($d$) dependence. In the following we show that, in general, one cannot avoid the dependence on $d$, and in particular, we show a lower bound that is $\Omega(\sqrt{dT})$, in the regime $d = O(\sqrt{T})$. For larger $d$, any algorithm must suffer a regret that is $\Omega(T^{3/4})$.
\begin{thm}[Lower bound for \SBCO]
\label{thm:lb}
For any algorithm $\cA$ for the \SBCO~problem, there exists $\actspace \subseteq \mathcal{B}_d(1)$, and sequence of loss functions $f_1, \ldots f_T : \cW \mapsto \R$ where for any $t$, $\E[f_t(\cdot)] \in [0,1]$, the expected regret suffered by $\cA$ satisfies:% such that for any $T > d$:
\begin{align*}
\E\big[ R_T(\cA)\big] & = \E\bigg[ \sum_{t=1}^T f_t(\w_t) - \min_{\w \in \cW} \sum_{t=1}^T f_t(\w) \bigg] \\
& \ge \frac{1}{32}\min\big(\sqrt{d T} , T^{3/4}\big).
\end{align*}
In particular, the lower bound holds under the assumptions \textbf{(A1)}, \textbf{(A2)} and \textbf{(A3)}.
%for any $T \ge ?$, and $c$ being a positive universal constant.
\end{thm}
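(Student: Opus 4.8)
The plan is to establish both terms of $\min(\sqrt{dT},T^{3/4})$ through a single construction, exploiting the identity $\min(\sqrt{dT},T^{3/4})=\sqrt{T\,\bar d}$ with $\bar d:=\min(d,\lceil\sqrt T\rceil)$. It therefore suffices to exhibit, for every $d$, a hard instance that lives on the first $\bar d$ coordinates (leaving the other $d-\bar d$ coordinates untouched, which is harmless since nothing ever acts on them) and forces expected regret $\Omega(\sqrt{T\bar d})$: when $d\le\sqrt T$ this reads $\Omega(\sqrt{dT})$, and when $d>\sqrt T$ it reads $\Omega(T^{3/4})$. By Yao's minimax principle it is enough to produce a distribution over instances (the contexts $\x_t$, the convex Lipschitz losses $\loss_t$, and the set $\actspace$) and show that every deterministic learner incurs expected regret $\ge\tfrac1{32}\sqrt{T\bar d}$ under it. From here on write $k:=\bar d$ and assume $k\le\sqrt T$.

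\emph{The instance distribution.} Draw a hidden sign pattern $\bsigma\sim\mathrm{Unif}(\{-1,+1\}^{k})$. Let $\actspace$ be an appropriately rescaled copy of $\cB_k(1)\subseteq\cB_d(1)$, take $\pred_t(\w;\x_t)=\langle\w,\x_t\rangle$, and cycle the contexts $\x_t$ through scaled coordinate vectors so that round $t$ ``activates'' a single coordinate $i_t\in[k]$ and each coordinate is activated $\Theta(T/k)$ times. Let $\loss_t$ be a random convex $L$-Lipschitz function $\R\to[0,1]$ whose conditional mean, viewed as a function of the scalar $\pred_t(\w)$ that is played, is affine with slope $-\gamma\,\sigma_{i_t}$, for a perturbation parameter $\gamma$ to be fixed (for instance a random two-point loss whose expectation is affine on the relevant interval). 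The scaling of $\actspace$ and the value of $\gamma$ must be chosen jointly so that (i) $\loss_t$ really does map into $[0,1]$ and is $L$-Lipschitz on the range of $\pred_t$, (ii) $\actspace\subseteq\cB_d(1)$, and (iii) the best fixed comparator $\w^*$ (the one ``aligned'' with $\bsigma$) beats the neutral action by a per-round margin of order $\gamma$. Under such an instance one obtains the chain $\E[\regret_T(\cA)]\ \ge\ c_1\,\gamma\sum_{t=1}^T\E[\mathrm{mis}_t]$, where $\mathrm{mis}_t\in[0,1]$ records how far the relevant coordinate of $\w_t$ is from agreeing with $\sigma_{i_t}$.

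\emph{The information bottleneck.} Since the learner only sees the scalar $\loss_t(\pred_t(\w_t))$, each round leaks at most $O(\gamma^2)$ nats about $\bsigma$: flipping one coordinate $\sigma_i$ only changes the $\Theta(T/k)$ rounds in which coordinate $i$ is active, so by the KL chain rule and Pinsker's inequality the total-variation distance between the transcript laws under $\bsigma$ and under the flipped $\bsigma^{\oplus i}$ is $O\big(\gamma\sqrt{T/k}\big)$. Taking $\gamma=c_2\sqrt{k/T}$ for a small absolute constant $c_2$ makes this a small constant, so an Assouad/Le Cam argument forces a constant fraction of the coordinates to remain essentially unidentified throughout, whence $\sum_t\E[\mathrm{mis}_t]=\Omega(T)$; combined with the displayed inequality this gives $\E[\regret_T(\cA)]=\Omega(\gamma T)=\Omega(\sqrt{kT})=\Omega(\sqrt{T\bar d})$, and carefully tracking constants through Pinsker and the margin computation yields the stated $\tfrac1{32}$. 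The step I expect to be the real obstacle is making sure the bound genuinely scales with $\sqrt d$ rather than collapsing to $\Theta(\sqrt T)$: the constraint $\actspace\subseteq\cB_d(1)$ naturally pushes the comparator's per-coordinate advantage down to $\Theta(1/\sqrt d)$, which would exactly cancel the $\sqrt d$ coordinates, so the contexts, the loss shapes, and the geometry of $\actspace$ have to be arranged so that requirement (iii) above — margin $\Theta(\gamma)$ per round with $\gamma$ still only $\Theta(\sqrt{d/T})$ — actually holds. Verifying simultaneously that $\loss_t$ is an honest convex, $L$-Lipschitz, $[0,1]$-valued function, and that a learner unable to recover $\bsigma$ truly cannot do anything smarter than the neutral action (i.e. the revealed contexts $\x_t$, though known, carry no extra exploitable information), is precisely what the transcript total-variation bound must certify, and is where the bookkeeping is delicate.
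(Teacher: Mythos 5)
Your proposal follows the same architecture as the paper's proof: a hidden sign vector $\bsigma\in\{\pm1\}^{\bar d}$ with $\bar d=\min(d,\Theta(\sqrt T))$, coordinate contexts each active for $\Theta(T/\bar d)$ rounds, losses that are affine in the scalar prediction plus noise, a KL-chain-rule/Pinsker argument showing the signs stay unidentified, and the embedding trick to handle $d\gg\sqrt T$. But there is a genuine gap at exactly the step you flag as "the real obstacle," and it is not a bookkeeping issue: with your stated parametrization (mean loss of slope $\gamma$ in the prediction, contexts along coordinate directions, $\actspace\subseteq\cB_d(1)$, played predictions of magnitude $O(1)$ so that each round leaks $O(\gamma^2)$), the requirement (iii) of a per-round comparator margin $\Theta(\gamma)$ cannot hold. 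The per-round margin and the per-round KL are controlled by the \emph{same} quantity, namely slope times the magnitude of the relevant prediction; a single $\w^*\in\cB_d(1)$ that is good for all $\bar d$ coordinates necessarily has $|\w^*(i)|\le 1/\sqrt{\bar d}$, so its per-round advantage is $\Theta(\gamma/\sqrt{\bar d})$, not $\Theta(\gamma)$. Feeding this into your own chain, the information constraint $\gamma\lesssim\sqrt{\bar d/T}$ then yields only $\Omega\big(\gamma T/\sqrt{\bar d}\big)=\Omega(\sqrt T)$ — precisely the collapse you were worried about. A second unaddressed point is your choice of $\actspace$ as a (rescaled) ball: a learner in a ball can modulate $|\w_t(i)|$, trading information against per-round loss, so the reduction of regret to a misidentification count $\mathrm{mis}_t$ is not clean either.

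The paper resolves both issues by a specific choice that your sketch stops short of: take $\actspace=\frac{1}{\sqrt d}\{\pm1\}^d$ (a scaled hypercube, so every play has $|\w_t(i)|=1/\sqrt d$ and the regret is exactly a sign-identification penalty), contexts $\x_i=\e_i$, and $f_t(\w)=\mu\sigma_i\,\w^\top\x_i+\varepsilon_t$ with $\varepsilon_t\sim\cN(0,\tfrac{1}{16})$ and slope $\mu=\frac{d}{16\sqrt T}$. Then the per-round margin and the per-round signal are both $\mu/\sqrt d=\Theta(\sqrt{d/T})$ (this product, not the slope, plays the role of your $\gamma$), the KL accumulated over a coordinate's block is $O(\mu^2 T/d^2)=O(1)$, and — crucially — $\mu\le 1$, i.e.\ $\loss_t$ stays $1$-Lipschitz and bounded, exactly in the regime $d\le 16\sqrt T$; this Lipschitz/boundedness cap on the slope is what produces the $T^{3/4}$ branch rather than an artifact of the embedding alone. (Scaling the contexts to norm $\sqrt{\bar d}$ instead, as your "rescaled" phrasing hints, shifts the factor into $D$ and can be made to work, but it must be stated and checked against (A2)--(A3); it is not automatic.) Finally, the constant $\tfrac1{32}$ comes out of the explicit Pinsker computation with these choices, which your proposal asserts rather than performs.
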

\paragraph{Proof Sketch.} We give a simple construction of problem instance to show the desired lower bound. We will work with linear model, i.e. $\pred_t(.) = \langle \x_t, \cdot \rangle$, and $\actspace = \frac{1}{\sqrt{d}}\{\pm 1\}^d$ which suffices for a lower bound. The idea is to divide the max rounds $[T]$ into $d$ equal length sub intervals (each of length $T/d$) $T_1,\ldots, T_{d}$ (let $T_0 = \emptyset$). Now, for $i \in [d]$, choose $\sigma_i \sim \text{Ber}(\pm 1)$, and set $\x_i = \e_i$. At round $t \in T_i = \Big\{\frac{T}{d}(i-1)+1, \ldots, \frac{T}{d}i \Big \}$, $i \in [d]$, adversary chooses $\x_t = \x_i$ and the loss function $\f_t(\w) = \mu\sigma_i(\w^\top\x_i) + \varepsilon_t, \text{ where } \varepsilon_t \sim \cN(0,\frac{1}{16}), \text{ for some constant } \mu > 0, \forall \w \in \cW$. For this problem instance, it is easy to show that $\w^* = -\frac{\bsigma}{\sqrt d} \in \cW$, where $\bsigma = (\sigma_1,\dots,\sigma_d)$. The learner's goal is then to figure out $\bsigma$. Now, we argue a lower bound for two regimes: \\
\textbf{Case $d \leq 16 \sqrt{T}$.} We can show that any learning strategy must suffer an expected regret of at least $\frac{\sqrt{dT}}{32}$ if we set $\mu = \frac{d}{16\sqrt{T}} < 1$ (used by the adversary for constructing $\f_t(\w)$ mentioned above). \\
\textbf{Case $d > 16\sqrt{T}$.} One can use an embedding trick, and simply ignore the $d - 16 \sqrt{T}$ dimensions. In this setup, we can argue that any learner must suffer a regret of at least $\frac{T^{3/4}}{32}$ by falling back on the first case. \\
Together, we get the desired lower bound. See Appendix A for details.
\begin{rem}
	Note that in the lower bound instance of Theorem~\ref{thm:lb}, $\loss_t$ and $\x_t$ are dependent random variables. In fact, this dependence is crucial for obtaining a lower bound that depends on the dimension $d$.
	It is indeed possible to design an algorithm that achieves $\otilde{\sqrt{T}}$ regret for the stochastic setting where $\loss_t$ is independent of $\x_t$. The main idea is this: all one needs to estimate is the minimizer of the one-dimensional function $\E\left[\loss_t\right]$. However, this situation does not seem to be of much interest and hence we do not provide a proof of this claim.
\end{rem}
%	\textbf{Notation.} Let $[n] = \{1,2, \ldots n\}$, for any $n \in \N$. For any $\delta> 0$, let $\cB_d(\delta)$ and $\cS_d(\delta)$ denote the ball and the surface of the sphere of radius $\delta$ in $d$ dimensions respectively.
%	Lower case bold letters denote vectors, upper case bold letters denote matrices.
%	%For any set $\cX \subseteq \R^d$, volume of $\cX$ is defined as $\textbf{Vol}(\cX): = \int_{\cX} d\x$. 
%	$\bP_{\cX,\|\cdot\|}(\x)$ denotes the nearest point projection of a point $\x \in \R^d$ on to set $\cX \subseteq \R^d$ with respect to norm $\|\cdot\|$, i.e. $\bP_\cX(\x):= \arg \min_{\z \in \cX}\|\x - \z\|$. $\I_d$ denotes the $d \times d$ identity matrix. For any vector $\x \in \R^d$, $\|\x\|_2$ denotes the $\ell_2$ norm of vector $\x$, and for any matrix $\bA \in \R^{m\times n}$, $\|\bA\|_2$ denotes the frobenius norm of matrix $\bA$. To be consistent with the literature, we will use $\f_t$ as a short-hand for $\loss_t(\pred_t(.))$ in this paper (as defined in \eqref{eqn:structuredloss}).

\section{An optimal algorithm for $\SBCO$}
\label{sec:algo}
In this section, we develop a method for the $\SBCO$ problem in the adversarial setting, and show that it achieves a regret that matches the lower bound presented in Section~\ref{sec:lb}, up to logarithmic factors. The proposed solution operates in two regimes, mirroring the lower bound analysis: in one regime, when $d = O(\sqrt{T})$, it relies on a kernelized exponential weights scheme, and in the other regime, when $d$ is larger, it relies on an online gradient descent style algorithm. This method, called \sbcalg, is presented in Algorithm~\ref{algo:sbco}.
\begin{center}
\begin{algorithm}
   \caption{\sbcalg}
   \label{algo:sbco}
\begin{algorithmic}[1]
   \STATE {\bfseries Input:} 
   \STATE ~~~ max rounds $T$, dimensionality $d$
   \IF {$d \leq \frac{WLD\sqrt{T}}{C\log(L'T)}$}
   %\STATE Receive $\x_t \in \cX$. \red{Set $\v_t(i) := \frac{u}{\x_t(i)}, ~\forall i \in [K]$}
   \STATE Run \EXPBCO~(Algorithm~\ref{algo:kexp}) with $\eta =  \frac{\sqrt{d\log(L'T)}}{C\sqrt{B T}}, T$
   \ELSE
   \STATE Run \OGD~(Algorithm~\ref{algo:ogd}) with $\eta = \frac{W\delta}{DC\sqrt{T}}, \delta = \Big( \frac{WDC}{3L \sqrt T} \Big)^{1/2}, \alpha = \delta,$ and $T$
   \ENDIF
\end{algorithmic}
\end{algorithm}
\end{center}
We now state our second key result of the paper --- \sbcalg achieves an optimal regret bound given below.
\begin{thm}[Regret bound for \sbcalg (Algorithm~\ref{algo:sbco})]
\label{thm:ub}
	If the loss functions $\loss_t: \predspace \to [0,C]$, $\f_t$ satisfy \emph{\textbf{(A1)}}, \emph{\textbf{(A2)}}, \emph{\textbf{(A3)}}, $\actspace = \cB_d(W)$, the expected regret of the \SBCO~learner presented in Algorithm~\ref{algo:sbco} can be bounded as:
	\begin{align*}
	\E[& \regret_T(\cA_{\sbcalg})] \le  \\
	& 2\sqrt 2 \min \bigg(C\sqrt{dT\log(L'T)}, \sqrt{WLDC}T^{3/4}\bigg)	
	\end{align*}
	where $L' = LDW$ and the expectation $\E[\cdot]$ is with respect to the algorithm's randomization.
\end{thm}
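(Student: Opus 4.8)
The plan is to read off the theorem from separate regret guarantees for the two branches of Algorithm~\ref{algo:sbco}, exploiting that the switching threshold $d^\star := \tfrac{WLD\sqrt T}{C\log(L'T)}$ is chosen precisely so that $C\sqrt{dT\log(L'T)}\le\sqrt{WLDC}\,T^{3/4}$ holds exactly when $d\le d^\star$ (squaring both sides, this is $d\log(L'T)\le WLD\sqrt T/C$). Thus it suffices to show: \textbf{(a)} in the regime $d\le d^\star$, \EXPBCO\ (Algorithm~\ref{algo:kexp}) with the stated $\eta$ has expected regret at most $2\sqrt2\,C\sqrt{dT\log(L'T)}$; and \textbf{(b)} in the regime $d> d^\star$, \OGD\ (Algorithm~\ref{algo:ogd}) with the stated $\delta,\eta,\alpha$ has expected regret at most $2\sqrt2\,\sqrt{WLDC}\,T^{3/4}$. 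Granting (a) and (b): when $d\le d^\star$ the executed branch gives $2\sqrt2\,C\sqrt{dT\log(L'T)}=2\sqrt2\min(C\sqrt{dT\log(L'T)},\sqrt{WLDC}T^{3/4})$, and when $d> d^\star$ the executed branch gives $2\sqrt2\,\sqrt{WLDC}T^{3/4}=2\sqrt2\min(\cdot,\cdot)$; either way the claimed bound follows.

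For \textbf{(a)} I would invoke \Lem{thm:ubkexp}: the \EXPBCO\ subroutine runs a kernelized exponential-weights update in the spirit of \citet{Bubeck17}, but since $\f_t=\loss_t\circ\pred_t$ with $\loss_t$ a one-dimensional convex $L$-Lipschitz function, the kernel and the variance/bias bookkeeping reduce to an essentially one-dimensional computation; this is exactly where the pseudo-1d structure pays off, replacing the $d^{9.5}\sqrt T$ of the general analysis by $\otilde{C\sqrt{dT}}$, and with the stated $\eta$ the constant works out to $2\sqrt2$. For \textbf{(b)} I would follow the randomized-gradient-descent template of \citet{Flaxman+04}, but perturbing the \emph{scalar} prediction rather than the $d$-dimensional action (this is what keeps the dimension out of the gradient-estimate norm). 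At round $t$: draw $v_t$ uniform on $\{-1,+1\}$, play $\pred_t(\w_t)+\delta v_t$ (shrinking the effective action ball by $\alpha$, extending $\loss_t$ slightly beyond $\predspace$ if needed so this is admissible), observe $\loss_t(\pred_t(\w_t)+\delta v_t)$, set $\hg_t=\tfrac{v_t}{\delta}\loss_t(\pred_t(\w_t)+\delta v_t)\nabla_\w\pred_t(\w_t)$, and take a projected OGD step on $\w$. The ingredients are: $\E[\hg_t\mid\w_t]$ is the $\w$-gradient of the $\delta$-smoothed loss, which by (A2) agrees with $\f_t$ up to $L\delta$; $\|\hg_t\|\le CD/\delta$ by (A3)(ii) and $\loss_t\in[0,C]$; and, using convexity (A1) and $D$-Lipschitzness of $\pred_t$, the linearized quantity $\sum_t\langle\hg_t,\w_t-\w^*\rangle$ upper-bounds the true regret up to an $O(LT\delta)$ smoothing/shrinkage bias. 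Combining with the standard projected-OGD inequality $\sum_t\langle\hg_t,\w_t-\w^*\rangle\le\tfrac{W^2}{2\eta}+\tfrac{\eta T}{2}(CD/\delta)^2$, the stated $\eta=\tfrac{W\delta}{DC\sqrt T}$ makes the two terms equal and yields $\tfrac{WCD\sqrt T}{\delta}+O(LT\delta)$; the stated $\delta=(\tfrac{WDC}{3L\sqrt T})^{1/2}$ balances these to $O(\sqrt{WLDC}\,T^{3/4})$, and tracking constants gives $2\sqrt2$. Here $\actspace=\cB_d(W)$ is used to control the $\|\w_1-\w^*\|$ and shrinkage terms.

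The main obstacle is part \textbf{(a)} — i.e.\ \Lem{thm:ubkexp}, the kernelized-exponential-weights analysis — where the substantive work (kernel construction, unbiased loss estimation, bounding the stability/variance of the exponential-weights potential) lives; the simplification relative to \citet{Bubeck17} is conceptual but still the hard step. In part \textbf{(b)} the only delicate points are keeping the smoothing-bias/gradient-variance trade-off tight enough that the constant lands at $2\sqrt2$ and handling the perturbed scalar prediction legitimately at the boundary of $\predspace$; the threshold computation and the two-case assembly are routine algebra.
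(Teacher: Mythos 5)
Your proposal is correct and takes essentially the same route as the paper: Theorem~\ref{thm:ub} is proved there exactly as you describe, by invoking Lemma~\ref{thm:ubkexp} in the regime $d\le \tfrac{WLD\sqrt T}{C\log(L'T)}$ and Lemma~\ref{thm:ubogd} otherwise, with the threshold chosen so that the executed branch attains the minimum of the two bounds. The only caveat (present in the paper itself, not a gap in your argument) is that the constants do not track perfectly: Lemma~\ref{thm:ubogd} gives $2\sqrt{3WLDC}\,T^{3/4}$ and Lemma~\ref{thm:ubkexp} carries an additive constant and an extra $\sqrt{B}$ logarithmic factor, so the displayed $2\sqrt2$ bound holds only up to such factors.
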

\begin{proof} The bound follows from Lemmas~\ref{thm:ubkexp} and~\ref{thm:ubogd}, the choice of parameters given in steps 4 and 6 of Algorithm~\ref{algo:sbco}, and noticing that when $d$ is larger than the threshold in step 3 of the Algorithm, OGD (Algorithm~\ref{algo:ogd}) achieves a smaller regret than Kernelized Exponential Weights (Algorithm~\ref{algo:kexp}).
\end{proof}
\begin{cor}
When $\pred_t$ is linear, i.e. $\pred_t(.) = \langle \x_t, \cdot \rangle$, then $D$ is the diameter of $\cX$.% and $L' =WLD$ in the RHS of the bound in Theorem~\ref{thm:ub}.
\end{cor}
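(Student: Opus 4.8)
The plan is to instantiate assumption \textbf{(A3)(ii)} for the linear prediction map and simply read off the admissible value of the constant $D$ that feeds into Theorem~\ref{thm:ub}. First I would compute $\nabla_\w \pred_t(\w;\x_t)$: when $\pred_t(\w;\x_t) = \langle \w, \x_t\rangle$ the map $\w \mapsto \pred_t(\w;\x_t)$ is linear, so its gradient is the constant vector $\x_t$, independent of $\w$. Hence the requirement $\|\nabla_\w \pred_t(\w;\x)\|_2 \le D$ for all $\w \in \cW$, $\x \in \cX$ collapses to the single condition $\|\x\|_2 \le D$ for all $\x \in \cX$, so the smallest valid choice is $D = \max_{\x \in \cX}\|\x\|_2$.

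Second, I would identify this quantity with the diameter of the context set. Writing $\mathrm{diam}(\cX) = \max_{\x,\x' \in \cX}\|\x-\x'\|_2$, one has $\mathrm{diam}(\cX) \ge \|\x - 0\|_2 = \|\x\|_2$ for every $\x \in \cX$ whenever $0 \in \cX$ --- the centering convention already adopted in Remark~\ref{rem:g} --- so we may take $D = \mathrm{diam}(\cX)$, which is exactly the claim. Plugging this back into the bound of Theorem~\ref{thm:ub}, and noting that by Cauchy--Schwarz $\pred_t \in [-DW, DW]$ so that the constant $L' = LDW$ is itself expressed purely through $W$ and $\mathrm{diam}(\cX)$, completes the argument.

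There is no genuine analytic obstacle here; the corollary is an instantiation of Theorem~\ref{thm:ub} rather than a new estimate. The only subtlety worth flagging is the geometric normalization: if $\cX$ neither contains nor is centered at the origin, the honest constant in \textbf{(A3)(ii)} is the radius $\max_{\x\in\cX}\|\x\|_2$ rather than $\mathrm{diam}(\cX)$; since translating $\cX$ leaves the bandit problem (and the regret) unchanged, this reduction restores the stated identification. I would therefore make this normalization step explicit and then conclude by direct substitution into Theorem~\ref{thm:ub}.
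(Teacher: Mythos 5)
Your proposal is essentially the paper's own justification: the paper never gives a separate proof of this corollary, it simply follows Remark~\ref{rem:g} — for $\pred_t(\w;\x)=\langle\w,\x\rangle$ the gradient is the constant vector $\x$, so \textbf{(A3)(ii)} reduces to $\|\x\|\le D$ for all $\x\in\cX$, and the paper identifies this bound with the ``diameter'' of $\cX$ (implicitly using a norm bound on the contexts, as in its experiments where $\|\x_t\|_2\le 1$). Your instantiation and substitution into Theorem~\ref{thm:ub} match this exactly.

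One caveat: your side remark that one may translate $\cX$ because ``translating $\cX$ leaves the bandit problem (and the regret) unchanged'' is not correct as stated. Under $\x\mapsto\x-\mathbf{c}$ the prediction becomes $\langle\w,\x\rangle-\langle\w,\mathbf{c}\rangle$, and the offset $\langle\w,\mathbf{c}\rangle$ depends on $\w$, so it cannot be absorbed into the scalar loss $\loss_t$; the translated instance is a genuinely different pseudo-1d problem. This does not damage the corollary — the honest constant is $D=\max_{\x\in\cX}\|\x\|_2$, and the paper's use of ``diameter'' is the same mild abuse of terminology (equivalent when $0\in\cX$, and in any case matching the two quantities up to a factor of $2$) — but the normalization should be justified by that containment convention rather than by an invariance argument.
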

A few remarks are in order.
\begin{rem}
\sbcalg requires the knowledge of the Lipschitz constant $L$ (e.g. in Step 3) of unknown loss $\loss_t$. This is a standard assumption made in the bandit convex optimization literature~\citep{Flaxman+04}.
\end{rem}
\begin{rem}\label{rem:stochastic}
It is straight-forward to state a result similar to Theorem~\ref{thm:ub} for the stochastic version of the \SBCO problem. 
\end{rem}
%We assume we have an `inverse-oracle' $g_t^{-1}: \cG_t \mapsto \cW$ such that given any $x \in \cG_t$, $g_t^{-1}(\x) \in \{\w \mid g_t(\w)  = x \}$.
%\praneeth{Given an overview of Algorithm~\ref{algo:sbco} by talking about both OGD and kernelized EW before going into the kernelized EW details.}
\subsection{\red{Regime $d = \widetilde{O}(\sqrt{T})$: Kernelized Exp. Weights}}
The key idea in our approach is to use a kernelized exponential weights scheme that exploits the pseudo-1d structure in the loss function. Exponential weights is a popular online learning algorithm for contextual bandits. Recently~\cite{Bubeck17} developed a meticulous kernel method that uses exponential weight update at its core to prove $O(\sqrt{T})$ regret for general convex (and Lipschitz) functions. Their approach hinges on using a smoothing operator (kernel) to obtain an estimator of the loss function $\f_t$ (the analogous estimator is fairly straight-forward in the multi-arm bandit setting) in the bandit convex optimization setting. 

\red{In the general $d$-dimensional setting, defining a kernel such that the resulting estimator of $\f_t$ is both (almost) unbiased and has bounded variance turns out to be extremely complicated and incurs large polynomial factors in dimension $d$. But, we can exploit the pseudo-1d structure in our setting to define a relatively simple kernel in the \textit{one-dimensional} prediction space instead. A key benefit of using the simple 1-d kernel is that much of the analysis in~\cite{Bubeck17} can be greatly simplified, and the proofs become significantly easier to follow.} 

\red{Before describing the main ideas of the algorithm, we need some notation and definitions set up.} Let $\p_t$ denote the distribution over parameters $\cW$ maintained by the learner at round $t$. Also let $\cG_t:= \{ \pred_t(\w,\x_t) \mid \w \in \cW \} \subseteq \R$, for any $t \in [T]$, and $\cW_t(y):= \{\w \in \cW \mid \pred_t(\w,\x_t) = y\}$, for $y \in \cG_t$. 
Given this, we obtain a one dimensional distribution $\q_t \in \cQ_t$ over $\cG_t$ from $\p_t$ as follows: $d \q_t(y):= \int_{\cW_t(y)} d \p_t(\w)$, $\forall y \in \cG_t$. 

\red{The kernelized exponential weights scheme crucially uses a kernel map to obtain a smooth estimate of the loss function on the action space based on a single point evaluation. The key observation we make is that, in our setting, it suffices to define such a kernel over the \textit{scalar} \textit{prediction} space than over the $d$-dimensional action space as in~\cite{Bubeck17}.} This $1$-dimensional kernel map, denoted $\K_t'$, is carefully constructed at each round $t$ based on $\q_t$ and the observed context $\x_t$ as given below:

\begin{defn}
	\label{def:kernel}
Given a distribution $\q_t$ over $\cG_t$, and $\epsilon > 0$, we define a one-dimension kernel $\K'_t: \cG_t \times \cG_t \mapsto \R_+$ as: 
\begin{align*}
\emph{\K}'_t(y,y') =  
\begin{cases}
\dfrac{\1\big( y \in [y', \bar y] \big)}{|y' - \bar y|},
~ \text{ if } |y' - \bar y| \ge \epsilon,\\
\dfrac{\1\big( y \in [\bar y - \epsilon, \bar y] \big)}{\epsilon},
~ \text{ when } y' \in [\bar y-\epsilon,\bar y + \epsilon]\\
\end{cases},
\end{align*}
where $\bar y:= \E_{\q_t}[y]$.
\end{defn}

For the kernel $\K_t'$ defined above, we can verify that $\int_{\predspace_t}\K_t'(y,y')d y = 1$ for every $y' \in \predspace_t$. 
Further we define a linear operator on any $\q \in \cQ_t$ (a smoothing of $\q$ w.r.t. $\K_t'$) as:
\begin{equation}
	\label{eq:ktprime}
\K_t'\q(y) := \int_{y' \in \predspace_t}\K_t'(y,y')d\q(y') ~~~ \forall y \in \predspace_t.
\end{equation}
%where $\K: \cW \times \cW \to \mathbb{R}$ on the RHS above is the kernel corresponding to the linear map $\tK$. $\K(\cdot, \cdot)$ is a measurable function that satisfies $\int_{\actspace}\K(\w,\w')d\w = 1$. 

This operator is particularly useful because for any valid probability measure $\q \in \cQ_t$, the map $\K_t'\q$ also defines a valid probability distribution over $\predspace_t$ (a precise statement is proved in Lem. \ref{lem:validp}, Appendix A.2). 

\textbf{Algorithm (main ideas). } We start with maintaining uniform weight over the $\actspace$: $\p_1 \leftarrow \frac{1}{\text{vol}(\cW)}$. At any time $t \in [T]$, upon receiving $\x_t$, we first compute the effective scalar decision space $\cG_t$ and sample a $y_t \in \cG_t$ according to the smoothed distribution of $\K_t'\q_t$. 
However, since the task is to choose a prediction point from the $d$-dimensional space $\cW$, we pick any (uniformly) random $\w_t$ that maps to $y_t$, i.e. $\w_t \in \cW_t(y_t)$ uniformly at random (Line 7 in Algorithm~\ref{algo:kexp}). 
Upon receiving the zeroth-order feedback $f_t(\w_t)$, we estimate the loss at each point $\w \in \cW$ as follows: 
\[ 
\tf_t(\w) \leftarrow \dfrac{\f_t(\w_t)}{\tK_t'\q_t(y_t)}\K_t'(y_t,y), ~~ \forall ~\w \in \cW.
\]
Note the above loss estimate $\tilde f_t$ ensures for a fixed $y \in \cG_t$, $\tilde f_t(\w)$ is same for all $\w \in \cW_t(y)$ (as justified by the structure: $f_t(\cdot) = \ell_t(g_t(\cdot))$). Finally, using the (estimated) loss $\tilde f: \cW \mapsto \R$, we update $\p_t$ identical to the standard exponential weights algorithm:
\[ 
\p_{t+1}(\w) \leftarrow \dfrac{\p_t(\w)\exp\big( -\eta \tilde \f_t(\w) \big)}{\int_\tw \p_t(\tw)\exp\big( -\eta \tilde \f_t(\tw) \big)d\tw }, ~~ \forall ~\w \in \cW.
\]
Algorithm~\ref{algo:kexp} summarizes the proposed kernelized exponential weights scheme for~\SBCO.

\begin{center}
	\begin{algorithm}
		\caption{\EXPBCO ~ }%\red{NEW VERSION (Aug 13, 2020)}
		\label{algo:kexp}
		\begin{algorithmic}[1]
			\STATE {\bfseries Input:} learning rate: $\eta >0$, $\epsilon > 0$, max rounds $T$.
			\STATE {\bfseries Initialize:}  $\w_1 \leftarrow \boldsymbol 0, \p_1 \leftarrow \frac{1}{\text{vol}(\cW)}$. %%, and confidence parameter: $\delta >0$
			\FOR {$t = 1,2, \cdots T$}
			\STATE Receive $\x_t$, and define $\cG_t:= \{ \pred_t(\w,\x_t) \mid \w \in \cW \} \subseteq \R$
			\STATE Define $\q_t$ such that $d \q_t(y):= \int_{\cW_t(y)} d \p_t(\w)$, $\forall y \in \cG_t$, where $\cW_t(y):= \{\w \in \cW \mid g_t(\w,\x_t)=y\}$ %-- \red{can compute since $g_t$ is known to the learner}
			\STATE Using $\x_t$ and $\q_t$, and given $\epsilon$, define kernel $\K_t': \cG_t \times \cG_t \mapsto \R$ (according to Definition \ref{def:kernel})
			\STATE Sample $y_t \sim \tK_t'\q_t$ and pick any $\w_t \in \cW_t(y_t)$ uniformly at random
			\STATE Play $\pred_t(\w_t; \x_t)$ and receive loss $\f_t(\w_t) = \loss_t(\pred_t(\w_t; \x_t))$ 
			\STATE $\tf_t(\w) \leftarrow \dfrac{\f_t(\w_t)}{\tK_t'\q_t(y_t)}\K_t'(y_t,y)$, for all $\w \in \cW(y), \forall y \in \cG_t$ \hspace{1cm}  $\triangleright$ {\color{cyan} estimator of $f_t$}
			\STATE $\p_{t+1}(\w) \leftarrow \dfrac{\p_t(\w)\exp\big( -\eta \tilde \f_t(\w) \big)}{\int_\tw \p_t(\tw)\exp\big( -\eta \tilde \f_t(\tw) \big)d\tw }$, for all $\w \in \cW$ %$\w_{t+1} \leftarrow \E_{\p_{t+1}}[\w]$ %(Exponential-weight update)
			\ENDFOR
			%\STATE {\bfseries output: $\bw_T = \Big( \frac{\sum_{t = 1}^{T} \w_t}{T} \Big)$} 
		\end{algorithmic}
	\end{algorithm}
\end{center}

We show in the following Lemma that the regret bound for Algorithm~\ref{algo:kexp} is bounded by $\tilde{O}(\sqrt{dT})$. Exploiting the problem structure gets us significantly improved dependence on $d$ compared to the original result by~\citet{Bubeck17} for the general case (as stated in Remark~\ref{rem:bco}).

\begin{lem}[Regret bound for Algorithm \ref{algo:kexp}]
	\label{thm:ubkexp}
	If the losses $\loss_t: \predspace \to [0, C]$ and $\pred_t$, $t \in [T]$ satisfy \emph{\textbf{(A1) (i)}}, \emph{\textbf{(A2)}}, and  \emph{\textbf{(A3)}}, then for the choice of $\{\K_t\}_{t \in [T]}$ as defined in Definition \ref{def:kernel}, the expected regret of Algorithm \ref{algo:kexp}, with learning rate $\eta = \Big(\frac{2d\log(L'T)}{B C^2T}\Big)^{\frac{1}{2}}$ and $\epsilon = \frac{1}{3LT}$, can be bounded as:
	\begin{align*}
	\E[\regret_T] & \le 4 + 2\sqrt 2\Bigg(\sqrt{d B C^2T\log(L ' T)} \Bigg) \\
	& = O\Big( C\sqrt{dT \log (L'T)} \Big)
	\end{align*}
	where $B = 2\Big(1 + \ln (3LT)  + \ln\Big(\beta_\cW - \alpha_\cW\Big)\Big)$, $L' = L D W$, $W = \text{Diam}(\cW)$ and the expectation $\E[\cdot]$ is with respect to the algorithm's randomization.
\end{lem}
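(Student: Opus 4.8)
The plan is to view Algorithm~\ref{algo:kexp} as continuous exponential weights (Hedge) run on the estimated losses $\tf_t$, and to bound separately its Hedge regret, the variance of $\tf_t$, and the bias introduced by the kernel. Throughout, let $\q_t$ be the push-forward of $\p_t$ under $\pred_t(\cdot;\x_t)$ (since $\cW$ is convex and $\pred_t$ is continuous by \textbf{(A3)(ii)}, $\cG_t$ is an interval and $\bar y_t:=\E_{\q_t}[y]\in\cG_t$), and let $\K_t'\q_t$ denote the smoothed sampling density from Line~7. Two facts about the kernel $\K_t'$ (Definition~\ref{def:kernel}) do the work. First, $\int_{\cG_t}\K_t'(z,y')\,dz=1$, so $\E_{y_t\sim \K_t'\q_t}[\tf_t(\w)]=\int_{\cG_t}\loss_t(z)\,\K_t'(z,\pred_t(\w))\,dz=:\hat\loss_t(\pred_t(\w))$ is a weighted average of values of $\loss_t$; also $\K_t'\q_t$ is a valid distribution (Lemma~\ref{lem:validp}) whose mean differs from $\bar y_t$ by at most $\epsilon$ (a short computation: $\int z\,\K_t'(z,y')\,dz$ is the midpoint of the corresponding averaging interval). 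Second, on its support $\K_t'(y_t,y')\le 1/\max(|y_t-\bar y_t|,\epsilon)$. I would also record the per-realization identity $\E_{\w\sim\p_t}[\tf_t(\w)]=\loss_t(y_t)=\f_t(\w_t)$, which holds because $\int \p_t(\w)\,\K_t'(y_t,\pred_t(\w))\,d\w=\int_{\cG_t}\K_t'(y_t,y')\,d\q_t(y')=(\K_t'\q_t)(y_t)$ cancels the $\K_t'\q_t(y_t)$ in the denominator of $\tf_t$.

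Fix the comparator $\q^*$ = uniform distribution on $\cW\cap(\w^*+\cB_d(r))$ for a small $r>0$, and split
\[
\E[\regret_T]=\underbrace{\sum_t\E[\f_t(\w_t)]-\sum_t\E\big[\E_{\q^*}[\tf_t]\big]}_{R_{\mathrm{exp}}}+\underbrace{\sum_t\E\big[\E_{\q^*}[\tf_t]\big]-\sum_t\f_t(\w^*)}_{\mathrm{Bias}}.
\]
Since $\tf_t\ge 0$, the standard Hedge argument (telescoping the log-partition potential and using $e^{-x}\le 1-x+x^2/2$ for $x\ge 0$), combined with the identity $\E_{\p_t}[\tf_t]=\f_t(\w_t)$, gives $R_{\mathrm{exp}}\le \tfrac1\eta\,\mathrm{KL}(\q^*\|\p_1)+\tfrac\eta2\sum_t\E\big[\E_{\p_t}[\tf_t^2]\big]$. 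Here $\mathrm{KL}(\q^*\|\p_1)=\ln\!\big(\mathrm{vol}(\cW)/\mathrm{vol}(\cW\cap(\w^*+\cB_d(r)))\big)\le d\ln(2W/r)$ (using $\cW=\cB_d(W)$; even when $\w^*$ lies on the sphere the intersection contains a ball of radius $r/2$). For the variance, the support bound gives $\E_{\p_t}[\tf_t^2]=\frac{\loss_t(y_t)^2}{(\K_t'\q_t(y_t))^2}\int_{\cG_t}\K_t'(y_t,y')^2\,d\q_t(y')\le \frac{C^2}{(\K_t'\q_t(y_t))\max(|y_t-\bar y_t|,\epsilon)}$, hence $\E_{y_t\sim\K_t'\q_t}[\E_{\p_t}[\tf_t^2]]\le C^2\int_{\cG_t}\frac{dy}{\max(|y-\bar y_t|,\epsilon)}\le C^2B$, the last integral being exactly $B$ once $\epsilon=1/(3LT)$ and $|\cG_t|\le \beta_\cW-\alpha_\cW$. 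So $R_{\mathrm{exp}}\le \tfrac1\eta\,d\ln(2W/r)+\tfrac\eta2\,BC^2T$; this is where exploiting the pseudo-1d structure pays off, the variance being only logarithmic in $T$ rather than $\mathrm{poly}(d)$.

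The crux is the bias term. By convexity of $\loss_t$ on the interval $\cG_t$ (assumption~\textbf{(A1)(i)}), writing each point of $[\pred_t(\w),\bar y_t]$ as $(1-\lambda)\pred_t(\w)+\lambda\bar y_t$ with $\lambda$ uniform on $[0,1]$ yields $\hat\loss_t(\pred_t(\w))\le\tfrac12\loss_t(\pred_t(\w))+\tfrac12\loss_t(\bar y_t)+O(\epsilon L)$ (the additive term absorbing the clipped branch $|\pred_t(\w)-\bar y_t|<\epsilon$). Averaging over $\q^*$ and using $L$-Lipschitzness of $\loss_t$ (\textbf{(A2)}) and $D$-Lipschitzness of $\pred_t$ (\textbf{(A3)(ii)}) gives $\E_{\q^*}[\loss_t\circ\pred_t]=\E_{\q^*}[\f_t]\le\f_t(\w^*)+LDr$; and the key point is that $\loss_t(\bar y_t)\le\E_{y_t\sim\K_t'\q_t}[\loss_t(y_t)]+O(\epsilon L)=\E[\f_t(\w_t)\mid\mathcal H_{t-1}]+O(\epsilon L)$ by Jensen, since $\K_t'\q_t$ has (almost) the same mean $\bar y_t$ as $\q_t$. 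Summing over $t$,
\[
\mathrm{Bias}\le\tfrac12\sum_t\big(\E[\f_t(\w_t)]-\f_t(\w^*)\big)+O(LDrT+\epsilon LT)=\tfrac12\,\E[\regret_T]+O(LDrT+\epsilon LT),
\]
so $\E[\regret_T]=R_{\mathrm{exp}}+\mathrm{Bias}$ rearranges to $\E[\regret_T]\le 2R_{\mathrm{exp}}+O(LDrT+\epsilon LT)$. Finally I would take $\epsilon=\tfrac1{3LT}$ and $r=\tfrac2{LDT}$ (so $LDrT=2$, $\epsilon LT=\tfrac13$, and $\ln(2W/r)=\log(L'T)$ with $L'=LDW$), and optimize $\eta$ in $\tfrac1\eta\,d\log(L'T)+\tfrac\eta2 BC^2T$, i.e. take $\eta=\big(\tfrac{2d\log(L'T)}{BC^2T}\big)^{1/2}$, which gives $R_{\mathrm{exp}}\le\sqrt{2dBC^2T\log(L'T)}$ and hence $\E[\regret_T]\le 4+2\sqrt2\sqrt{dBC^2T\log(L'T)}$, as claimed.

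I expect the main obstacle to be the bias estimate: one must see that convexity of $\loss_t$ splits $\hat\loss_t(\pred_t(\w))$ into two halves that are charged to different places — $\tfrac12\loss_t(\pred_t(\w))$ to the comparator $\f_t(\w^*)$ via Lipschitzness on the small ball, and $\tfrac12\loss_t(\bar y_t)$ to the algorithm's own loss via the mean-preservation of the kernel — so that the bias folds back into $\tfrac12\E[\regret_T]$ instead of degrading like $\Theta(L(\beta_\cW-\alpha_\cW)T)$; and, relatedly, one must check that the $\epsilon$-clipped branch of $\K_t'$ spoils neither the mean-preservation nor the convexity estimate. The remaining ingredients — the Hedge bookkeeping, the volume-ratio bound for $\mathrm{KL}(\q^*\|\p_1)$, the logarithmic variance integral, and the tuning of $\eta,\epsilon,r$ — are routine.
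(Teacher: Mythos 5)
Your proposal is correct and follows essentially the same route as the paper's proof: the unbiasedness identity $\E_{y_t}[\tf_t(\w)]={\K'_t}^*\loss_t(\pred_t(\w))$, the convexity-based self-bounding property of the 1-d kernel (the $\tfrac12$-split with $O(\epsilon L)$ slack, folding half the smoothed loss back into the algorithm's own expected loss), the exponential-weights bound with the second-moment term, the logarithmic variance integral giving $B$, and a smoothed comparator near $\w^*$ with $\mathrm{KL}\le d\log(L'T)$. The only differences are cosmetic --- you use a small ball of radius $r$ around $\w^*$ instead of the contracted set $\cW_\kappa$, and charge the comparator approximation via Lipschitzness of $\loss_t\circ\pred_t$ on that ball rather than via Lipschitzness of ${\K'_t}^*\loss_t$ --- which do not change the argument.
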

\paragraph{Proof sketch.} Detailed proof (and supporting lemmas) is presented in Appendix A. Here, we sketch all its key constituents. The proof relies on key properties of the aforementioned 1-d kernel map, shown in Lemma~\ref{lem:krnl_prop}.
%Let us define $\cG_t = \{ \pred_t(\w) \mid \w \in \cW \} \subseteq \R$, for any $t \in [T]$, and let $\cW_t(y) = \{\w \in \cW \mid \pred_t(\w) = y\}$, for $y \in \cG_t$.
We start by analyzing the expected regret w.r.t. the optimal point $\w^* \in \cW$ (denote $y_t^* = \pred_t(\w^*)$ for all $t \in [T]$). Define $\forall y \in \cG_t, ~\tilde \ell_t(y) := \tf_t(\w)$, for any $\w \in \cW(y)$.
Also let $\mathcal H_t = {\boldsymbol \sigma}\big(\{\x_\tau,\p_\tau,\w_\tau,f_\tau\}_{\tau = 1}^{t-1} \cup \{\x_t,\p_t\}\big)$ denote the sigma algebra generated by the history till time $t$. 
Then the expected cumulative regret of Algorithm \ref{algo:kexp} over $T$ time steps can be bounded as:
%Then note that Lemma \ref{lem:p_prime} holds.
%noting that $1/\lambda_{y_t} \le 2$  (from Property $\#$2 of Lemma \ref{lem:krnl_prop})
\begin{align}
\label{eq:reg_kp_main}
\nonumber &\E[R_T(\w^*)] := \E\bigg[\sum_{t = 1}^T\Big(\ell_t(g_t(\w_t; \x_t)) - \ell_t(g_t(\w^*; \x_t))\Big)\bigg]\\
\nonumber &= \E\bigg[\sum_{t = 1}^T\Big(\ell_t(y_t) - \ell_t(y_t^*)\Big)\bigg] = \E\bigg[\sum_{t=1}^{T}\big<\K'_t\q_t - \bdelta_{y^*_t}, \loss_t \big>\bigg] \\ %~~[\text{since } y_t \sim K_t'\q_t]\\
%\nonumber & \le \E\bigg[\sum_{t=1}^{T}\frac{3\epsilon L}{\lambda} + \frac{1}{\lambda} \big< \K'_t(\q_t - \bdelta_{y^*_t}), \loss_t \big>\bigg] ~~\big[ \text{from Property}\#2 \text{ of Lemma } \ref{lem:krnl_prop} \big]\\
\nonumber & \le 6\epsilon L T + 2\sum_{t=1}^{T}\E\bigg[\big< \K'_t(\q_t - \bdelta_{y^*_t}), \loss_t \big>\bigg] \\%~~\big[ \text{we can choose } \lambda = 1/2, \text{ see proof of Lemma } \ref{lem:krnl_prop}\big]\\
%\nonumber & \overset{\text{by } \eqref{eq:ktadjprop}}= 6\epsilon L T  + 2\sum_{t=1}^{T}\E\bigg[\sum_{t=1}^{T}\big< {\K'}^*_t\loss_t,(\q_t - \bdelta_{y^*_t})\big>\bigg] \\
%
\nonumber & \overset{(a)}{=} 6\epsilon L T  + 2\sum_{t=1}^{T}\E\bigg[\sum_{t=1}^{T}\E_{y_t \sim \K'_t\q_t}\Big[\big< \q_t - \bdelta_{y^*_t}, \tilde \loss_t \big> \mid \mathcal H_t \Big]\bigg] \\
& = 6\epsilon L T + 2\sum_{t=1}^{T}\E\bigg[\sum_{t=1}^{T}\E_{y_t \sim \K'_t\q_t}\Big[\big< \p_t - \bdelta_{\w^*}, \tilde f_t \big> \mid \mathcal H_t  \Big]\bigg]
\end{align}
where the last equality follows by Lemma~\ref{lem:p_eqv}, and by $\big< \bdelta_{\w^*}, \tf_t \big> = \tf_t(\w^*) = \tilde \loss_t(y^*_t) = \big< \bdelta_{y^*_t}, \tilde \loss_t \big>$; $(a)$ and the first inequality rely on the properties of the kernel in Lemma~\ref{lem:krnl_prop}.
%the penultimate equality $(a)$ follows noting that for any $y' \in \cG_t$:
Let us denote by $\p^*$ a uniform measure on the set $\cW_\kappa:=\{\w \mid \w = (1-\kappa)\w^* + \kappa \w', \text{ for any } \w' \in \cW\}$ for some $\kappa \in (0,1)$. 
We can then show that the inner expectation in~\eqref{eq:reg_kp_main} can be bounded by 
$\sum_{t=1}^{T}\E_{y_t \sim \K'_t\q_t}[\big< \p_t ,\tilde f_t \big> - \big<\p^*,\tilde f_t\big>] + \kappa L D W T$
using the assumption that $g_t$ is $D$-Lipschitz, and a certain adjoint operator on the kernel map is $L$-Lipschitz. The term $\sum_{t=1}^{T}\big< \p_t - \p^*, \tf_t \big>$ can be bounded (via Lemma \ref{lem:elem}) by $\dfrac{KL(\p^*||\p_1)}{\eta} + \frac{\eta}{2}\big< \q_t, \tilde \loss_t^2\big>$. Now, the second term $\E_{y_t \sim \K'_t\q_t}\Big[\big< \q_t, \tilde \loss_t^2\big>\Big]$ relates to the variance of the loss estimator, and can be bounded by a constant, ensured by our choice of the 1d-kernel; and the first, KL divergence, term can be bounded by $d \log \frac{1}{\kappa}$ by the definition of $\p^*$. Plugging these bounds in \eqref{eq:reg_kp_main}, letting $L' = L D W$, and setting $\kappa = \frac{1}{L'T}$, $\epsilon = \frac{1}{3L T}$,~\eqref{eq:reg_kp_main} yields:
\begin{align*}
\E[R_T(\w^*)] &\leq& O(1) + 2\Bigg(\frac{d\log L' T}{\eta}  + \frac{\eta B C^2T}{2} \Bigg)
\end{align*}
By choosing $\eta$ to minimize the RHS above, the proof is complete. %Finally noting that $B = 2\Big(1 + \ln \frac{1}{\epsilon} \Big) + \ln 2$ yields the desired result.

We observe from Lemma~\ref{thm:ubkexp} that when $d$ is small and constant, the bound behaves like $\sqrt{T}$ but when $d$ is large, say, $d = T^{2/3}$, the bound behaves like $T^{5/6}$. In what follows, we show that an online gradient descent style algorithm achieves a regret that scales as $T^{3/4}$ \emph{independent of} $d$. 

\subsection{\red{Larger $d$: Online Gradient Descent}} Consider the standard online gradient descent algorithm of~\cite{Zink03}, but with an estimator in lieu of the true gradient as in~\cite{Flaxman+04} to deal with bandit feedback. The key observation here is that we can perform the gradient estimation much more accurately exploiting the pseudo-1d structure. In particular, using the chain rule, one can write the gradient of the loss function wrt to $\w$ as:
\begin{equation}  
\nabla_\w \f_t(\w) = \nabla_\w \loss_t(\pred_t(\w; \x_t)) = \loss_t' (\pred_t(\w; \x_t))\nabla_\w \pred_t(\w; \x_t)
\end{equation}
Notice that because we have access to $\pred_t$, we know the $d$-dimensional gradient part accurately. The only unknown part in the equation above is the \emph{scalar} quantity which is $\loss_t' (\pred(\w_t; \x_t))$. For this, we can use the one-point estimator as in~\cite{Flaxman+04}, which in expectation gives the gradient wrt to not the actual loss $\loss_t$ but wrt to a smoothed loss, as stated in the following lemma.
\begin{lem}
\label{lem:gradest}
Fix $\delta > 0$ and let $u$ take 1 or -1 with equal probability. Define the one-point gradient estimator, $\hat{\nabla}\loss_t (a) := \frac{1}{\delta} \loss_t \big( a + \delta u\big)u$. Then:
\begin{align*}
\nabla_\w  \E_{u}\big[ \loss_t\big(\pred_t(\w_t; \x_t) + \delta u\big) \big] &=& \E_{u}\big[\hat{\nabla}\loss_t\big(\pred_t(\w_t; \x_t)\big)\big] \\ & &.\ \nabla_\w \pred_t(\w_t; \x_t)
\end{align*}
\end{lem}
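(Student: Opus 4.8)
The plan is to prove Lemma~\ref{lem:gradest} by reducing it to the classical one-dimensional smoothing identity of~\citet{Flaxman+04} and then attaching the known factor $\nabla_\w \pred_t(\w_t;\x_t)$ via the chain rule. Throughout, $\x_t$ and $\w_t$ are fixed, so write $a := \pred_t(\w_t;\x_t) \in \R$ for brevity; note $a$ is itself a (known, deterministic given the history) function of $\w_t$, and $u \in \{-1,+1\}$ is the independent Rademacher sign.

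\emph{Step 1: the scalar smoothing identity.} First I would establish that, for the scalar smoothed loss $\hat\loss_t(a) := \E_u[\loss_t(a + \delta u)] = \tfrac12\loss_t(a+\delta) + \tfrac12\loss_t(a-\delta)$, its derivative with respect to $a$ equals $\E_u[\hat\nabla\loss_t(a)] = \tfrac1\delta\E_u[\loss_t(a+\delta u)u] = \tfrac1{2\delta}(\loss_t(a+\delta) - \loss_t(a-\delta))$. This is immediate by differentiating the two-term average; no integration-by-parts is even needed because $u$ is discrete (this is the $d=1$, sphere-is-two-points specialization of the usual Stokes/divergence-theorem argument). Since $\loss_t$ is convex and $L$-Lipschitz it is differentiable almost everywhere and the difference quotient is finite, so everything is well-defined; I would remark that one should interpret $\loss_t'$ in the a.e. sense (or pass to the convex subdifferential), which is standard in this literature.

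\emph{Step 2: compose with $\pred_t$ and differentiate in $\w$.} Now define $F_t(\w) := \E_u[\loss_t(\pred_t(\w;\x_t) + \delta u)] = \hat\loss_t(\pred_t(\w;\x_t))$, viewing this as a function of $\w$ with $\x_t,\delta$ fixed. Apply the chain rule: $\nabla_\w F_t(\w) = \hat\loss_t{}'(\pred_t(\w;\x_t))\,\nabla_\w\pred_t(\w;\x_t)$. By Step~1, $\hat\loss_t{}'(\pred_t(\w;\x_t)) = \E_u[\hat\nabla\loss_t(\pred_t(\w;\x_t))]$, which is exactly the scalar factor in the statement. Evaluating at $\w = \w_t$ yields
\[
\nabla_\w \E_u[\loss_t(\pred_t(\w_t;\x_t) + \delta u)] = \E_u[\hat\nabla\loss_t(\pred_t(\w_t;\x_t))]\cdot\nabla_\w\pred_t(\w_t;\x_t),
\]
which is the claim. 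I would note that the expectation over $u$ and the gradient in $\w$ commute because $u$ takes only two values (finite sum) and each summand $\w \mapsto \loss_t(\pred_t(\w;\x_t)+\delta u)$ is, by \textbf{(A1)} and \textbf{(A2)} composed with the smooth $\pred_t$, locally Lipschitz — so dominated convergence applies, or one simply differentiates the finite average termwise.

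\emph{Main obstacle.} The only real subtlety — and the step I would be most careful about — is the differentiability/smoothness bookkeeping: $\loss_t$ is merely convex and Lipschitz, hence not everywhere differentiable, so ``$\hat\loss_t{}'$'' and the chain-rule step need the a.e./subgradient justification, and one must make sure the smoothing by $\delta u$ is genuinely what regularizes things (indeed $\hat\loss_t$ is differentiable wherever the two one-sided pieces don't conspire, and in any case the difference-quotient representation $\tfrac1{2\delta}(\loss_t(a+\delta)-\loss_t(a-\delta))$ sidesteps the issue entirely since it never differentiates $\loss_t$ itself). The composition with $\pred_t$ adds nothing hard because $\pred_t$ is assumed smooth with $\|\nabla_\w\pred_t\| \le D$ by \textbf{(A3)(ii)}. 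Everything else is a one-line chain-rule computation once the scalar identity of Step~1 is in hand.
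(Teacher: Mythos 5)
There is a genuine error in Step 1, and it sits exactly at the point your ``main obstacle'' paragraph tries to wave away. With $u$ Rademacher, the smoothed loss you define is the two-point average $\hat\loss_t(a)=\tfrac12\loss_t(a+\delta)+\tfrac12\loss_t(a-\delta)$, and differentiating this termwise gives $\tfrac12\big[\loss_t'(a+\delta)+\loss_t'(a-\delta)\big]$, \emph{not} the divided difference $\tfrac{1}{2\delta}\big[\loss_t(a+\delta)-\loss_t(a-\delta)\big]=\E_u\big[\hat{\nabla}\loss_t(a)\big]$. These two quantities differ for generic convex Lipschitz losses: take $\loss_t(y)=\max(0,y)$, $\delta=1$, $a=1/2$; the first equals $1/2$ while the second equals $3/4$. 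So the scalar identity is false as you derive it, and Step 2 then attaches the wrong scalar factor. The ``$d=1$, sphere-is-two-points specialization of Stokes'' does not say that the derivative of the \emph{sphere} average equals the sphere average of $\tfrac{1}{\delta}\loss_t(\cdot+\delta u)u$; it says that the derivative of the \emph{ball} average equals that sphere average, and conflating the two is precisely the gap.

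The paper's proof (via Lemma 1 of \citet{Flaxman+04} applied to $\cB_1(1)$) keeps this ball/sphere distinction: it defines $\hat\loss_t(y)=\E_{v\sim\U(\cB_1(1))}\big[\loss_t(y+\delta v)\big]=\tfrac{1}{2\delta}\int_{y-\delta}^{y+\delta}\loss_t(s)\,ds$, so by the fundamental theorem of calculus $\hat\loss_t$ is continuously differentiable and $\hat\loss_t'(y)=\tfrac{1}{2\delta}\big[\loss_t(y+\delta)-\loss_t(y-\delta)\big]=\E_u\big[\hat{\nabla}\loss_t(y)\big]$ with Rademacher $u$; the chain rule with $\nabla_\w\pred_t(\w_t;\x_t)$ then gives the lemma. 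Note this also dissolves your a.e.-differentiability worry: the interval average is $C^1$ even though $\loss_t$ is not, whereas your two-point average is no smoother than $\loss_t$ itself. Part of the confusion is traceable to the statement of Lemma~\ref{lem:gradest}, which writes the left-hand expectation with the same Rademacher $u$; the paper's own proof, and its use inside the proof of Lemma~\ref{thm:ubogd}, interpret that expectation as uniform over $\cB_1(1)$, and under the literal Rademacher reading the displayed identity fails (same counterexample). Your Step 2 chain-rule bookkeeping is fine, but the proof only becomes correct after Step 1 is replaced by the interval-smoothing identity.
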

The resulting online gradient descent method for \SBCO is given in Algorithm~\ref{algo:ogd}. In Lemma~\ref{thm:ubogd}, we give the $O(T^{3/4})$ regret bound for the algorithm.
\begin{center}
\begin{algorithm}
   \caption{\OGD}
   \label{algo:ogd}
\begin{algorithmic}[1]
   \STATE {\bfseries Input:} 
   \STATE ~~~ Perturbation parameter: $\delta>0$, $\alpha \in (0,1]$, learning rate: $\eta>0$, max rounds $T$
   \STATE {\bfseries Initialize:}
%   \STATE ~~~ \red{$\cW_{\alpha} = \{\w \in \cW \mid g_t(\w) \in \cG_t - \alpha \} \, \forall t \in [T]$ ---?? How to ensure this?} 
   \STATE ~~~ $ \w_1 \leftarrow 0$ %any random point from $\cW_\alpha$
   \FOR {$t = 1,2, \cdots T$}
   \STATE Sample $u \sim \bU\big(\cS_1(1)\big)$ (i.e. select $u$ uniformly from $\{-1,1\}$)
   \STATE Receive $\x_t$
%   \STATE Receive $\x_t \in \cX$. Set $\v_t(i) := \frac{u}{\x_t(i)}, ~\forall i \in [K]$}
   \STATE Project $\w_{t} \leftarrow \bP_{\actspace_{\alpha}}(\w_{t})$, where $\cW_{\alpha} = \{\w \in \actspace \mid \pred_t(\w; \x_t) \in \predspace - \alpha \}$
 %  \STATE Play $\w'_t = \pred_t^{-1}(\pred_t(\w_t; \x_t) + \delta u) ~(\in \cW)$, and receive $\f_t(\w'_t)$
   \STATE Play $a_t = \pred_t(\w_t; \x_t) + \delta u$ and receive loss $\loss_t(a_t)$
   %\STATE Receive loss $\loss_t(\y_t + \u)$
   %\STATE \red{Update $\tw_{t+1} \leftarrow \w_{t} - \eta\Big[ \frac{1}{\delta}\big({f_t(\w_t + \delta\v_t)}u\big) \x_t\Big]$}
   \STATE Update $\w_{t+1} \leftarrow \w_{t} - \eta\Big[ \frac{1}{\delta}\loss_t(a_t)u\nabla\pred_t(\w_t; \x_t)\Big]$ \hfill $\triangleright$ {\color{cyan} One-point estimator of $\nabla \f_t(\w_t)$}
   \ENDFOR
   %\STATE Set $\hat \theta_{\hat i} \leftarrow \big( \max_{i = 1}^n\hat\theta_i + 1 \big)$
%   \UNTIL{$(S == \emptyset)$}
   %\STATE {\bfseries Output: $\bw_T = \Big( \frac{\sum_{t = 1}^{T} \w_t}{T} \Big)$} 
\end{algorithmic}
\end{algorithm}
%\vspace{2pt}
\end{center}
 
\begin{lem}[Regret bound for Algorithm \ref{algo:ogd}]
\label{thm:ubogd}
Consider $\cW = \cB_d(W)$. If the losses $\f_t: \cW \to [0, C]$ and $\pred_t$, $t \in [T]$ satisfy \emph{\textbf{(A1) (ii)}}, \emph{\textbf{(A2)}}, and \emph{\textbf{(A3) (ii)}}, then setting $\eta = \frac{W\delta}{DC\sqrt{T}}$, $\delta = \Big( \frac{WDC}{3L \sqrt T} \Big)^{1/2}$, and $\alpha = \delta$,  
the expected regret of Algorithm \ref{algo:ogd} can be bounded as:
\[
\E[\regret_T(\learner)] \leq 2\sqrt{3WLDC} T^{3/4},
\]
where the expectation $\E[\cdot]$ is with respect to the algorithm's randomization. 
\end{lem}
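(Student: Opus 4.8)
# Proof Proposal for Lemma~\ref{thm:ubogd}

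\textbf{Overall strategy.} The plan is to follow the standard Flaxman--Kalai--McMahan analysis of one-point bandit gradient descent, but carried out in the scalar prediction space so that the smoothing parameter $\delta$ perturbs only the one-dimensional quantity $\pred_t(\w_t;\x_t)$ rather than the full $d$-dimensional action. I will decompose the regret into three pieces: (i) the cost of smoothing $\loss_t$ by the perturbation $\delta u$, which is $O(L\delta T)$ by Lipschitzness (A2); (ii) the cost of restricting the comparator to the shrunk feasible set $\cW_\alpha$ defined in Line~8 of Algorithm~\ref{algo:ogd}, which by (A3) and convexity costs at most $O(\alpha\,(\text{something})\,T)$ when $\alpha=\delta$; and (iii) the regret of online gradient descent run on the surrogate losses $\hat\loss_t(\w):=\E_u[\loss_t(\pred_t(\w;\x_t)+\delta u)]$ using the unbiased gradient estimators $\hat g_t := \tfrac1\delta \loss_t(a_t)\,u\,\nabla\pred_t(\w_t;\x_t)$.

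\textbf{Key steps in order.} First I would invoke Lemma~\ref{lem:gradest} to establish that $\E_u[\hat g_t \mid \w_t,\x_t] = \nabla_\w \hat\loss_t(\w_t)$, so that the update in Line~10 is an unbiased stochastic gradient step on the smoothed loss $\hat\loss_t$, which is still convex in $\w$ (composition of convex $\ell_t$-smoothing with the fixed map $\pred_t$, plus (A1)(ii)). Second, I would bound the norm of the gradient estimator: $\|\hat g_t\|\le \tfrac1\delta\,|\loss_t(a_t)|\cdot\|\nabla\pred_t(\w_t;\x_t)\| \le \tfrac{CD}{\delta}$, using $\loss_t\in[0,C]$ (A3 implicitly via the range) and $\|\nabla_\w\pred_t\|\le D$ from (A3)(ii). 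Third, apply the textbook OGD regret bound (e.g. Zinkevich) over $\cW=\cB_d(W)$: for step size $\eta$, $\sum_t \langle \hat g_t, \w_t - \w^*\rangle \le \tfrac{W^2}{2\eta} + \tfrac{\eta}{2}\sum_t \|\hat g_t\|^2 \le \tfrac{W^2}{2\eta} + \tfrac{\eta T C^2 D^2}{2\delta^2}$; taking expectations and using convexity of $\hat\loss_t$ gives $\sum_t \E[\hat\loss_t(\w_t) - \hat\loss_t(\w^*_\alpha)] \le \tfrac{W^2}{2\eta} + \tfrac{\eta T C^2 D^2}{2\delta^2}$, where $\w^*_\alpha$ is the best point in the shrunk set $\cW_\alpha$. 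Fourth, I would relate $\hat\loss_t$ back to $\loss_t\circ\pred_t = \f_t$: since $|\hat\loss_t(\w) - \f_t(\w)| = |\E_u[\loss_t(\pred_t(\w)+\delta u)] - \loss_t(\pred_t(\w))| \le L\delta$ by (A2), each of the $T$ rounds contributes at most $2L\delta$ slack, giving a $2L\delta T$ term. Fifth, I would handle the comparator gap: the true optimum $\w^*\in\cB_d(W)$ may lie outside $\cW_\alpha$, but by (A3)(i) the prediction space is an interval $[\alpha_\cW,\beta_\cW]$ of length at most $WD$ (since $\|\pred_t\|$ is controlled by $\|\nabla\pred_t\|\le D$ and $\|\w\|\le W$), and shrinking by $\alpha$ moves the achievable prediction by at most $\alpha$, so by $L$-Lipschitzness of $\loss_t$ this costs at most $L\alpha T$; actually one projects $\w^*$ to a nearby feasible point whose prediction differs by $O(\alpha)$, and the clean bookkeeping uses $\alpha = \delta$. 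Finally, collect: $\E[\regret_T] \le \tfrac{W^2}{2\eta} + \tfrac{\eta T C^2 D^2}{2\delta^2} + 2L\delta T + L\delta T = \tfrac{W^2}{2\eta} + \tfrac{\eta T C^2 D^2}{2\delta^2} + 3L\delta T$, then plug in $\eta = \tfrac{W\delta}{DC\sqrt T}$ to balance the first two terms into $\tfrac{WDC\sqrt T}{\delta}$, and $\delta = \big(\tfrac{WDC}{3L\sqrt T}\big)^{1/2}$ to balance that against $3L\delta T$, yielding $2\sqrt{3WLDC}\,T^{3/4}$ after simplification.

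\textbf{Main obstacle.} The routine parts are the OGD regret inequality and the arithmetic of balancing $\eta,\delta$. The step that needs genuine care is the comparator/feasibility bookkeeping in step five: one must argue cleanly that the projection onto $\cW_\alpha$ (Line~8) does not lose more than $O(L\delta T)$ against the unrestricted optimum $\w^*$, and simultaneously that after playing $a_t = \pred_t(\w_t;\x_t) + \delta u$ the perturbed action stays in the valid prediction interval $\predspace$ — this is exactly why $\cW_\alpha$ is defined to shrink the prediction range by $\alpha$ and why the choice $\alpha=\delta$ is forced. A secondary subtlety is that the projection $\bP_{\cW_\alpha}$ is with respect to $\ell_2$ in parameter space while the shrinkage is defined in prediction space; for linear $\pred_t$ this is transparent, but in general one should note (A3)(ii) makes the map $\w\mapsto\pred_t(\w;\x_t)$ $D$-Lipschitz, which is all that is needed to convert the $O(\alpha)$ prediction-space displacement into the loss bound via (A2). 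I would also double-check that convexity of $\hat\loss_t$ in $\w$ genuinely follows from (A1)(ii) applied to the $\delta$-perturbed losses, which it does since smoothing preserves convexity and the perturbation acts additively on $\pred_t$.
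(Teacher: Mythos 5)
Your proposal follows essentially the same route as the paper's proof: use Lemma~\ref{lem:gradest} to show the one-point estimator is an unbiased gradient of the smoothed loss $\hf_t$, bound its norm by $DC/\delta$, apply the stochastic-gradient OGD regret bound (the paper cites Lemma~2 of \citet{Flaxman+04}, which is exactly the $\frac{W^2}{2\eta}+\frac{\eta}{2}\sum_t\|h_t\|^2$ bound you write out), then pay $2L\delta T$ for smoothing and $\alpha L T$ for restricting the comparator to $\cW_\alpha$, and finally balance with $\eta=\frac{W\delta}{DC\sqrt{T}}$, $\alpha=\delta=\big(\frac{WDC}{3L\sqrt{T}}\big)^{1/2}$ to get $2\sqrt{3WLDC}\,T^{3/4}$. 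The decomposition, constants, and parameter choices all match the paper's argument.
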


Thus, we are able to guarantee optimal regret bound for \sbcalg matching the lower bound, by falling back on a suitably modified OGD algorithm when $d$ is sufficienly large. 

\begin{rem} [Assumptions for OGD vs Kernelized Exponential Weights] To show the regret bound for Algorithm~\ref{algo:kexp}, we only need convexity of the one-dimensional function $\loss_t$ unlike in the OGD case (Algorithm~\ref{algo:ogd}) where we need convexity of $\f_t$ in the $d$-dimensional parameter $\w$. In particular, our analysis of kernelized exponential weights method (in Lemma~\ref{thm:ubkexp}) does not need other assumptions on $\pred_t$ other than boundedness, which may be counter-intuitive (for example, consider when $\pred_t$ is possibly non-convex and $\loss_t$ is the identity function). But note that the analysis relies on the complete knowledge of $\pred_t$ and ignores the computational complexity. To be able to implement Algorithm~\ref{algo:kexp} efficiently, we will need some nice property of $\pred_t$ like convexity. %\naga{needs revision.}
\end{rem}

The following remark shows that pseudo-1d structure helps improve known bounds for bandit convex optimization by a factor of $\sqrt{d}$ at least.
\begin{rem}\label{rem:linband}
Consider the simple setting of bandit convex optimization when the loss functions are linear, $\f_t(\w_t) = \langle \w_t, \xi_t \rangle$, where $\xi_t$ is the cost vector chosen by the adversary, not revealed to the learner. It is known that, for bandit linear optimization, the minimax optimal regret is $\Theta(d\sqrt{T})$~\citep{Shamir15}. Note that, in contrast, the context vector $\x_t$ is revealed to the learner in our setting, and only the (scalar) loss computed on the linear model $\langle \w_t, \x_t\rangle$ is not revealed, which captures typical online decision making setting. This way of posing the problem helps us leverage the structure, and get a better dependence on $d$.
\end{rem}
%\naga{Comment on implementability/practical version of the algorithm?}

\section{Simulations}
\label{sec:exp}
We present synthetic experiments that showcase the regret bounds established in Section~\ref{sec:algo}. We work with a linear $\pred_t$ for all the experiments. We fix $\cW = \cB_d(1)$, context vectors from $\{\|\x_t\|_2 \le 1\}$, and the two loss functions (a) $\f_t(\w) = (\langle \w, \x_t \rangle - y^*_t)^2$ where $y^*_t = \langle \w^*, \x_t\rangle$, for a fixed $\w^* \in \cB_d(1)$, and (b) $\f_t(\w) = |\langle \w, \x_t \rangle - y^*_t|$. 
%so that $\|\nabla g_t(\w,\x_t)\| \le D \le 1$, note that $\alpha_\cW = 1, \beta_\cW = 1$, and hence $f_t \in [-1,1]$ too
 The details on implementing Algorithm~\ref{algo:kexp} are given in Appendix B.
\paragraph{OGD vs Kernelized Exponential Weights for $\SBCO$.}
In Figure~\ref{fig:results} (a)-(b), we show the expected regret of Algorithm~\ref{algo:ogd} on the synthetic problem (averaged over 50 problem instances), scaled by $1/t^{3/4}$ at round $t$, for the two loss functions; this, according to Lemma~\ref{thm:ubogd}, ensures that the expected regret converges to a numerical constant, independent of $d$, with increasing rounds. We observe this is indeed the case for different $d$ values. In Figure~\ref{fig:results} (c)-(d), we show the expected regret of Algorithm~\ref{algo:kexp} on this problem (averaged over 50 problem instances), scaled by $1/\sqrt{t}$ at round $t$, for the two loss functions; this, according to Lemma~\ref{thm:ubkexp}, ensures that the regret converges to $O(\sqrt{d})$, with increasing rounds; notice that, e.g., in (c), for different $d$ values, the converged scaled regret is $\gamma \sqrt{d}$ where $\gamma \approx 0.02/\sqrt{40} \approx 0.015/\sqrt{20} \approx 0.01/\sqrt{10} \approx 0.003$.

\begin{figure*}
\centering
%\vspace{-5pt}
%\hspace{-10pt}
%\includegraphics[trim={0cm 0cm 0cm 0cm},clip,scale=0.16]{./Plots/scldreg_vs_t/rvd_sqr2.png}
%\hspace{15pt}
%\includegraphics[trim={0cm 0cm 0cm 0cm},clip,scale=0.16]{./Plots/scldreg_vs_t/rvd_abs2.png}
\subfigure[Alg~\ref{algo:ogd} (squared $\loss$)]{\includegraphics[trim={0cm 0cm 0cm 0cm},clip,scale=0.45]{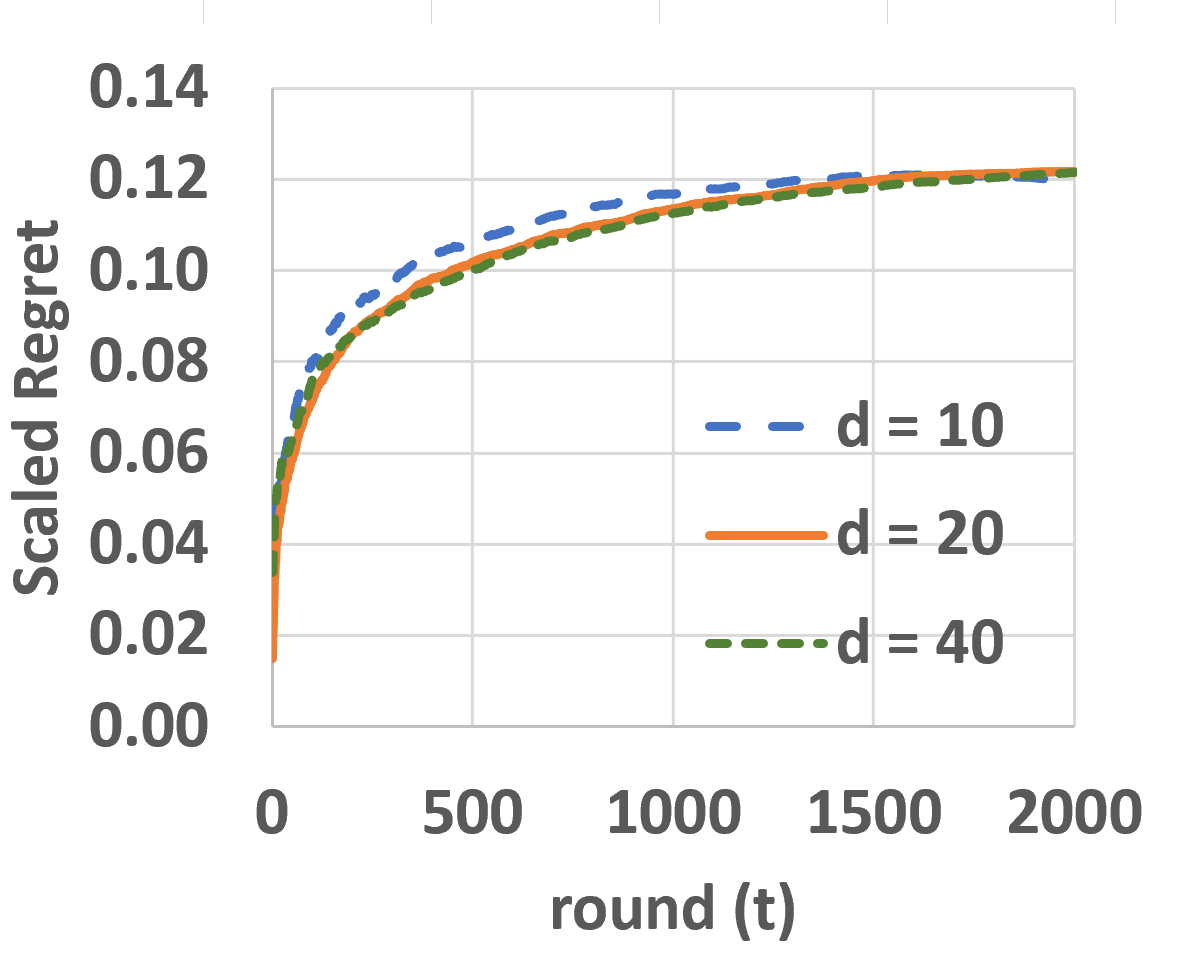}}
\subfigure[Alg~\ref{algo:ogd}  (abs. $\loss$)]{\includegraphics[trim={0cm 0cm 0cm 0cm},clip,scale=0.45]{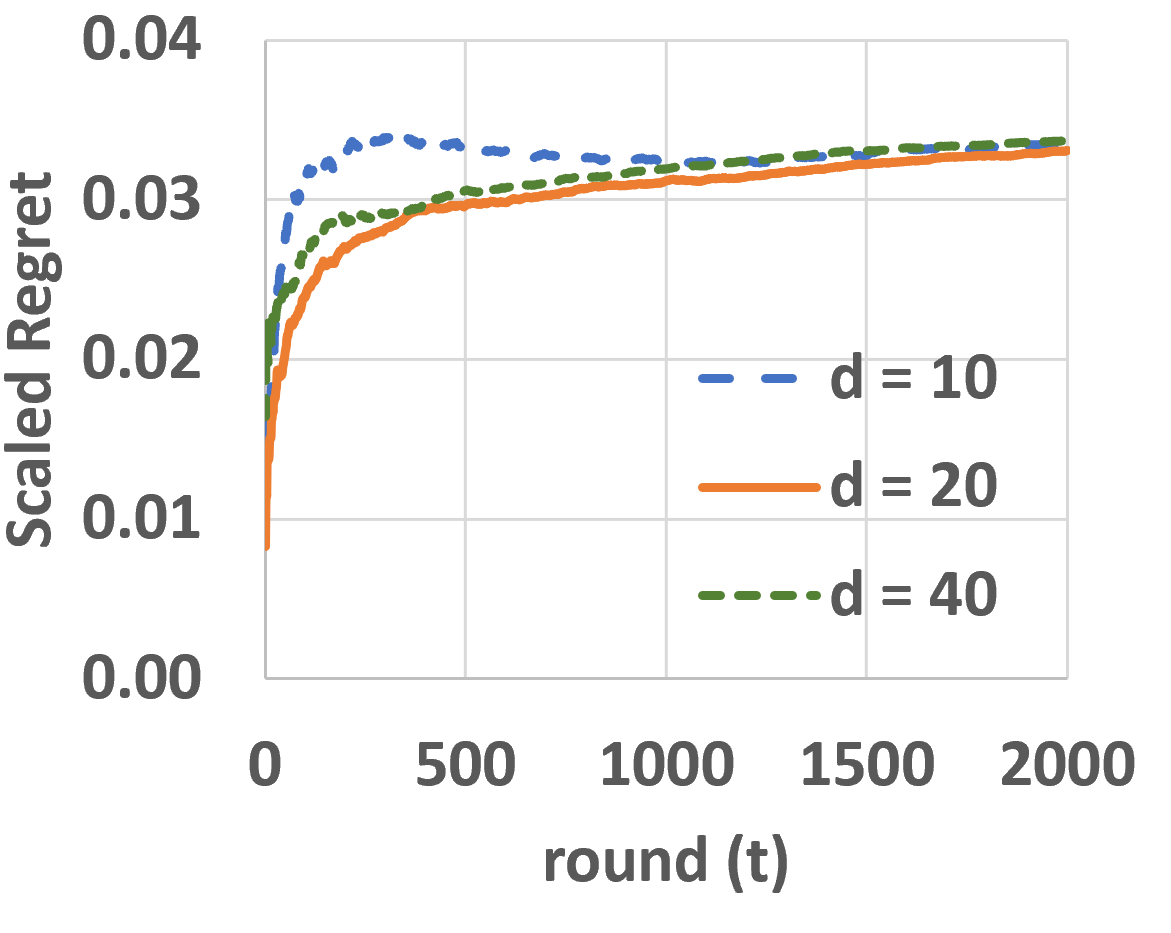}}
\subfigure[Alg~\ref{algo:kexp} (squared $\loss$)]{\includegraphics[trim={0cm 0cm 0cm 0cm},clip,scale=0.45]{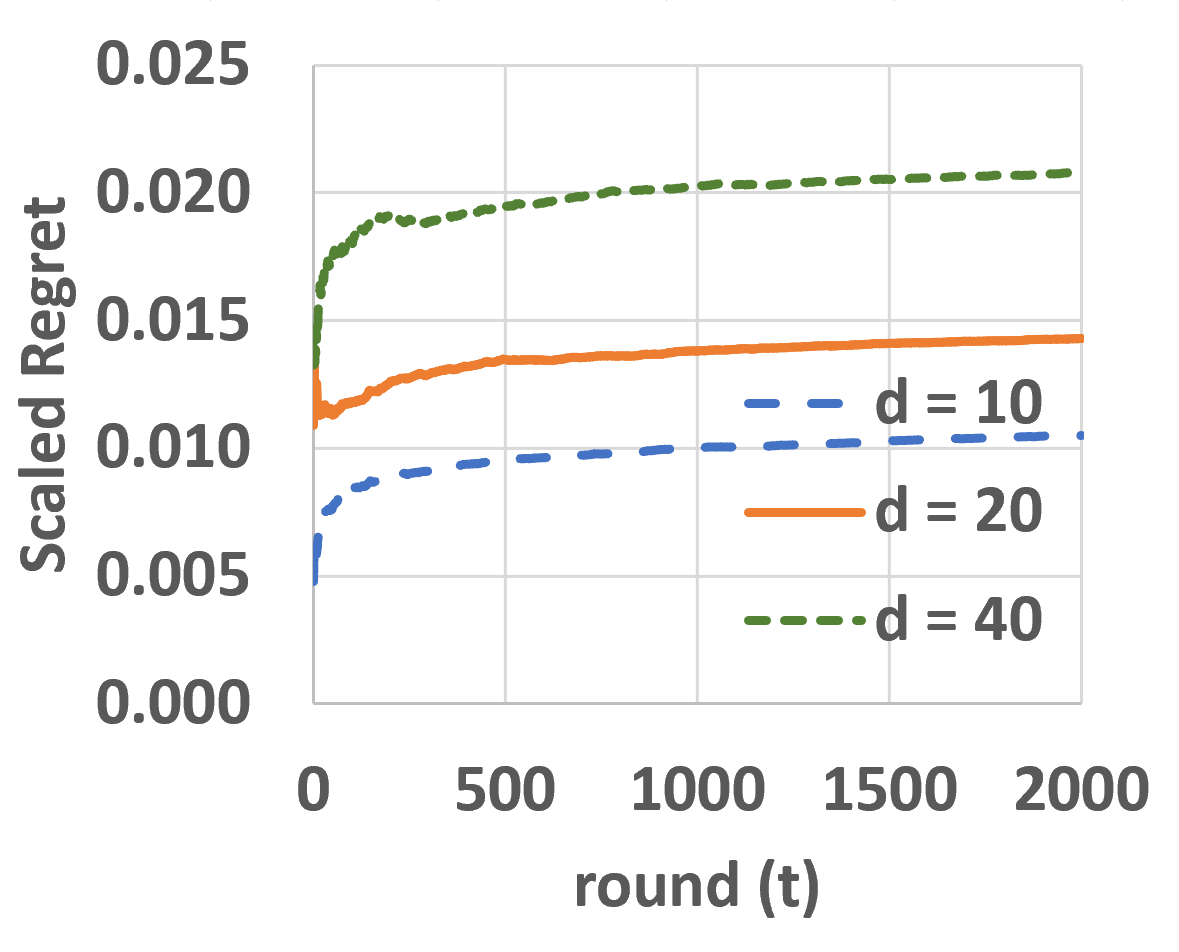}}
\subfigure[Alg~\ref{algo:kexp} (abs. $\loss$)]{\includegraphics[trim={0cm 0cm 0cm 0cm},clip,scale=0.45]{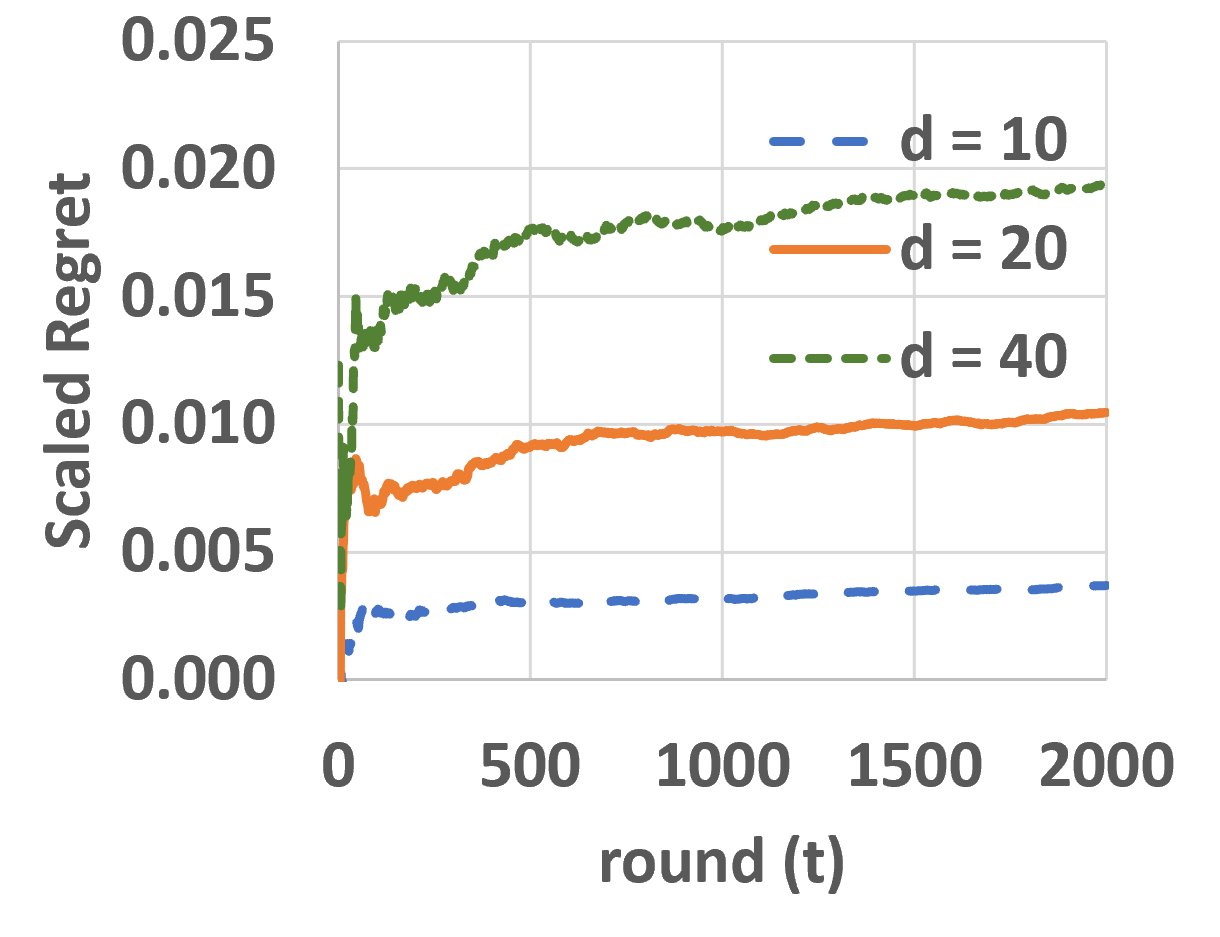}}
\subfigure[OGD of \citep{Flaxman+04} (squared $\loss$)]{\includegraphics[trim={0cm 0cm 0cm 0cm},clip,scale=0.45]{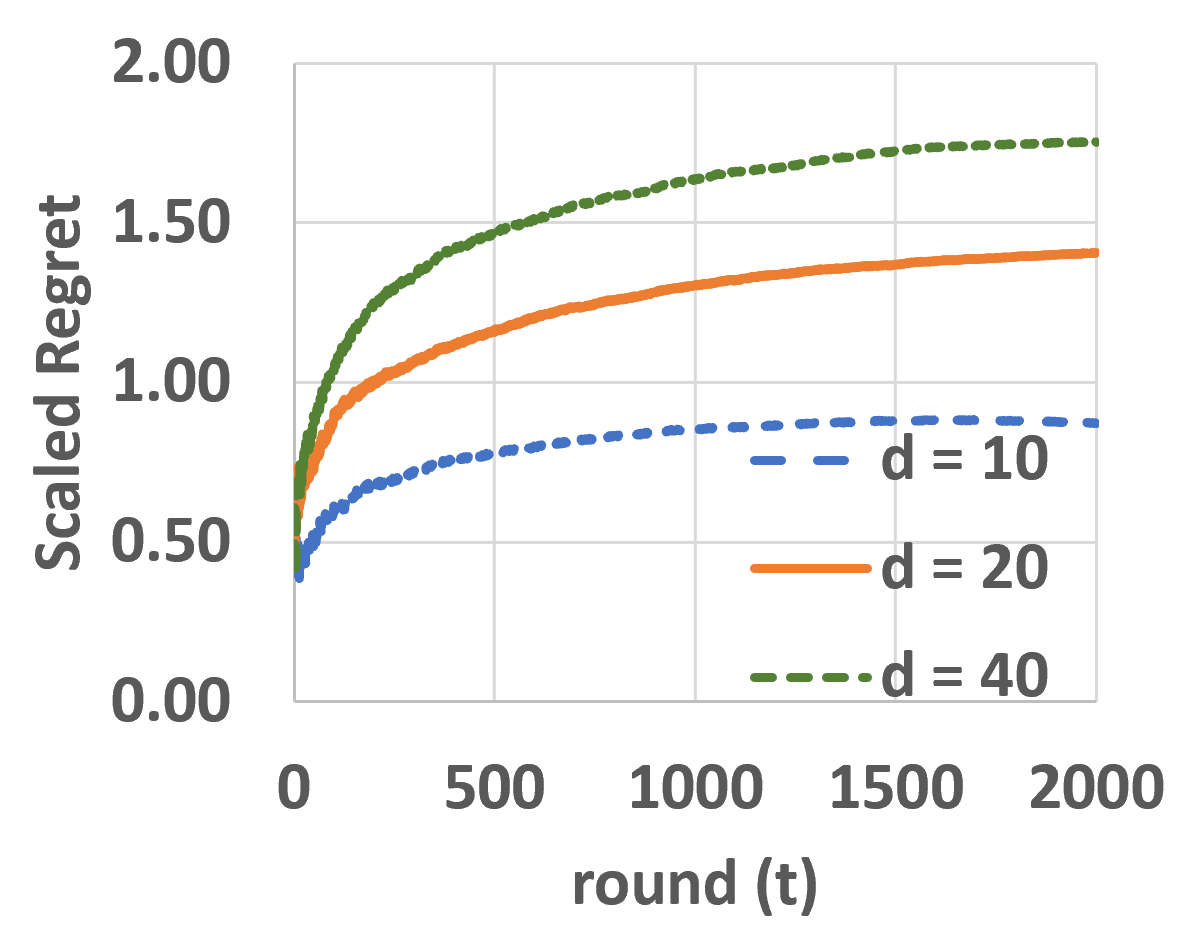}}
\subfigure[OGD of \citep{Flaxman+04} (abs. $\loss$)]{\includegraphics[trim={0cm 0cm 0cm 0cm},clip,scale=0.45]{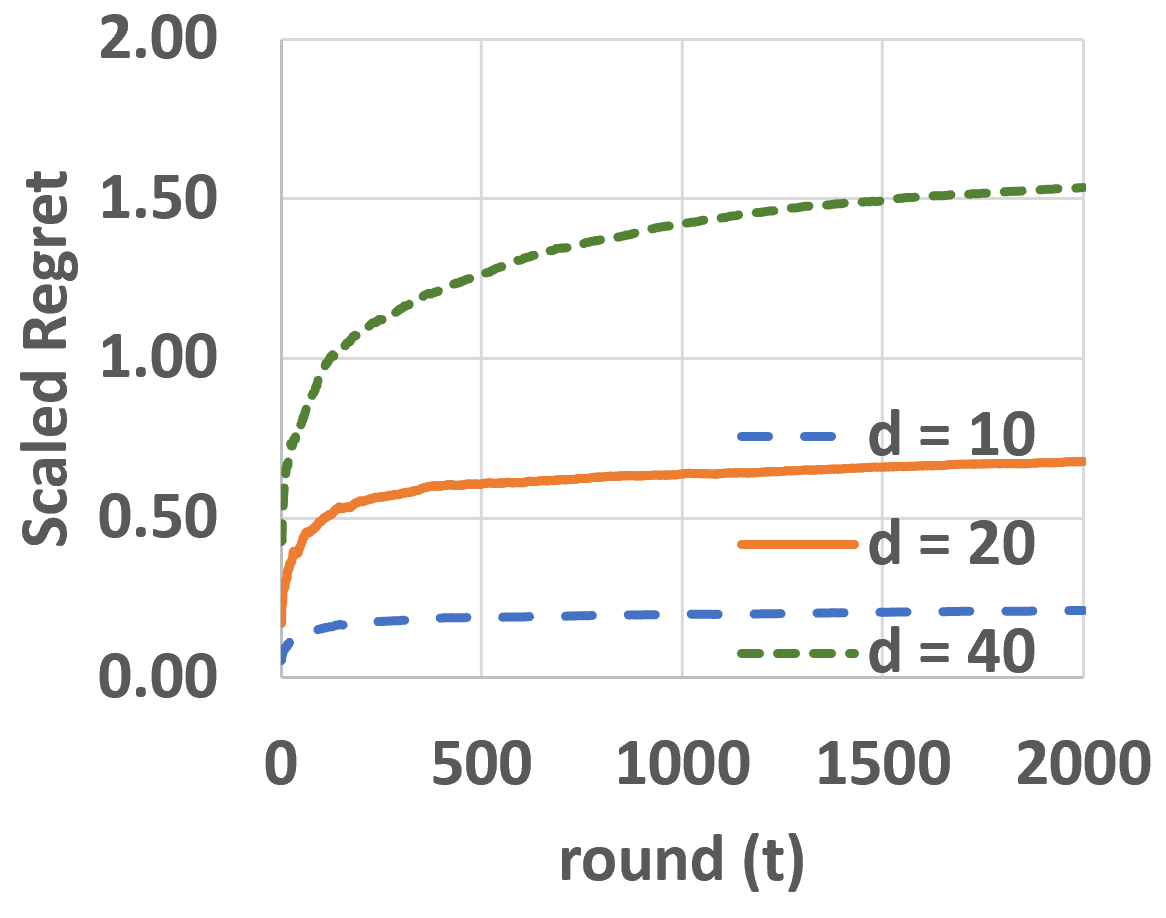}}
\caption{(a)-(b): Algorithm~\ref{algo:ogd}: Scaled cumulative regret $\cR_t/t^{3/4}$ vs. $t$ for the squared loss (a) and the absolute deviation loss (b). By Lemma~\ref{thm:ubogd}, the (scaled) regret converges to a numerical constant independent of $d$. (c)-(d): Algorithm~\ref{algo:kexp}: Scaled cumulative regret $\cR_t/\sqrt{t}$ vs. $t$ for the squared loss (c) and the absolute deviation loss (d). In accordance with Lemma~\ref{thm:ubkexp}, the (scaled) regret converges to a value proportional to $\sqrt{d}$. (e)-(f): OGD algorithm of~\citep{Flaxman+04}: Scaled cumulative regret $\cR_t/t^{3/4}$ vs. $t$ for the squared loss (e) and the absolute deviation loss (f). Compared to the corresponding pots in (a) and (b), it is evident that the regret is much higher here; in particular, in accordance with the result in~\citep{Flaxman+04}, the (scaled) regret converges to a value proportional to $\sqrt{d}$.}
\label{fig:results}
%\hspace{0pt}
%\vspace{-5pt}
\end{figure*}

\paragraph{Comparison to~\citep{Flaxman+04}.} We present comparisons to the bandit OGD algorithm of~\citep{Flaxman+04} that does not exploit the pseudo-1d structure of the loss, achieving a regret of $O(\sqrt{d}T^{3/4})$, as against our Algorithm~\ref{algo:ogd} that achieves a regret of $O(T^{3/4})$.
%In Figure~\ref{fig:results2} (a)-(b), we reproduce the corresponding plots from Figure~\ref{fig:results} in the main paper for clarity and ease of comparison. 
In Figure~\ref{fig:results} (e)-(f), we show the expected regret of the bandit OGD algorithm of~\citep{Flaxman+04} on the same data as earlier (averaged over 50 problem instances), scaled by $1/t^{3/4}$ at round $t$, for the two loss functions; this, according to~\citep{Flaxman+04}, ensures that the regret converges to $O(\sqrt{d})$, with increasing rounds; notice that, e.g., in (e), for different $d$ values, we can infer that the ratio of the converged regrets of  ~\citep{Flaxman+04} and our algorithm (corresponding to plot (a)) is at most $3\sqrt{d}$; the additional constant factor also appears in the analysis of~\citep{Flaxman+04}.

\section{Conclusions and Future Work}
We have formulated a novel bandit convex optimization problem with pseudo-1d structure motivated by its applications in online decision making and large-scale parameter tuning in systems. We provide optimal minimax regret bounds for the pseudo-1d bandit convex optimization problem. An open question here is if there is a single algorithm that achieves the regret trade-off we show in the lower bound (as against our method, that relies on two schemes in two regimes of dimensionality of the problem). Another follow-up direction is to extend the results in this work to settings when $\pred_t$ is high-dimensional (when one needs to take multiple decisions based on the observed context), say $\pred_t(\mathbf{W};\x) = \mathbf{W}\x$, where the parameters to estimate are $\mathbf{W} \in \mathbb{R}^{m \times d}$.

\newpage
\bibliographystyle{plainnat}
\bibliography{pseudo-BCO-nuerips19}

\newpage
\appendix
\onecolumn{

\section*{\centering\Large{Supplementary: \SBCOexp}}
%\vspace*{1cm}

\section{Proofs}

\iffalse %%%%%%%%%%%%%%%%%%%%%%%%%%
\begin{thm}[Lower bound for \SBCO]
\label{thm:lba}
For any algorithm $\cA$ for the \SBCO~problem, there exists $\actspace \subseteq \mathcal{B}_d(1)$, and sequence of loss functions $f_1, \ldots f_T : \cW \mapsto \R$ where for any $t$, $\E[f_t(\cdot)] \in [0,1]$, the expected regret suffered by $\cA$ satisfies:% such that for any $T > d$:
\[
\E\big[ R_T(\cA)\big] = \E\bigg[ \sum_{t=1}^T f_t(\w_t) - \min_{\w \in \cW} \sum_{t=1}^T f_t(\w) \bigg] \ge \frac{1}{32}\min\big(\sqrt{d T} , T^{3/4}\big).
\]
In particular, the lower bound holds under the assumptions \textbf{(A1)}, \textbf{(A2)} and \textbf{(A3)}.
%for any $T \ge ?$, and $c$ being a positive universal constant.
\end{thm}
\fi %%%%%%%%%%%%%%%%%%%%%%%%%%%%%%%%

\subsection{Proof of Theorem \ref{thm:lb} }

\begin{proof}
\textbf{Problem instance construction.} Divide the time interval $[T]$ into $d$ equal length sub intervals (hence each of length $\frac{T}{d}$) $T_1,\ldots, T_{d}$. Assume $T_0 = \emptyset$.

For $i \in [d]$: Choose $\sigma_i \sim \text{Ber}(\pm 1)$, and set $\x_i = \e_i$. Denote $\bsigma = (\sigma_1,\ldots,\sigma_d)$.

At any time $t \in T_i = \Big\{\frac{T}{d}(i-1)+1, \ldots, \frac{T}{d}i \Big \}$, $i \in [d]$,
\begin{enumerate}
\item Choose $\pred_t(\w;\x_t) = \w^\top\x_t$. Clearly $\nabla_{\w}(\pred_t(\cdot;\x_t)) = \x_t \in \{0,1\}^d$ which is revealed to the learner at the beginning of round $t$. We choose $\x_t = \x_i$.
\item Loss function $f_t(\w) = \loss_t(\w^\top\x_t) + \varepsilon_t = \mu\sigma_i(\w^\top\x_i) + \varepsilon_t, \text{ where } \varepsilon_t \sim \cN(0,\frac{1}{16}), \text{ for some constant } \mu > 0 \text{ (to be decided later)},\, \forall \w \in \cW$.
\item Learner plays $\w_t = [\w_t(1),\ldots,\w_t(d)] \in \cW$.

\end{enumerate}

Denote $\bw_i := \frac{1}{T_d}\sum_{t \in T_i}\w_t$, where $T_d = \frac{T}{d}$.

\begin{rem}[Optimum Point]
Note for any fixed $\w \in \cW$, the total expected loss is $\E\bigg[ \sum_{i = 1}^d\sum_{t \in T_i}f_t(\w) \bigg] = \frac{\mu T}{d}\sum_{i = 1}^d (\sigma_i\x_i^\top)\w = \frac{ T}{d}( \tilde \bsigma^\top\w)$, where $\tilde \sigma(i) = \mu \sigma_i,~ \forall i \in [d]$. Thus clearly the best point (i.e. the minimizer) $\w^* = -\frac{\bsigma}{\sqrt d}$. Note $\w^* \in \cW$.
\end{rem}

%\begin{assump}[Restrict the algorithm $\cA$]
%By construction of the loss function, only $\w_t(i)$ matters for any $t \in T_i$. Since we consider a linear objective, without loss of generality, we may further restrict the learner to only play $\w_t$ such that $\w_t \in \cW_i^{small} := \frac{1}{\sqrt d}\{\pm1\}^d$, as the best point can only be an extreme point of $\cW$. For this class of restricted algorithms, the derived lower bound would still be a valid lower bound over the class of all possible algorithms which are free to play any $\w_t \in \cW$, since $\w^* \in \cW^{small}$. Moreover note that for any $t \in T_i,~ i \in [d]$, $\w_t(i)$ is a function of $\{\w_\tau,f_\tau\}_{\tau \in [t] \setminus T_{i-1}}$ \red{Check}.
%\end{assump}

%\begin{assump}[Restrict the algorithm $\cA$]
%By construction of the loss function, only $\w_t(i)$ matters for any $t \in T_i$. Since we consider a linear objective, without loss of generality, we may further restrict the learner to only play $\w_t$ such that $\w_t \in \cW_i^{small} := \{\pm \e_i\}$ for any $t \in T_i$, as note that the best point (minimizer) of the $T_i$ phase lies in $\cW_i^{small}$.
%For this class of restricted algorithms, the derived lower bound would still be a valid lower bound over the class of all possible algorithms which are free to play any $\w_t \in \cW$. Moreover note that for any $t \in T_i,~ i \in [d]$, $\w_t(i)$ is a function of $\{\w_\tau,f_\tau\}_{\tau \in [t] \setminus T_{i-1}}$.\red{---Is this a strong assumption to make?}.
%\end{assump}

The expected regret of any $\cA$:

\begin{align}
\label{eq:eq1}
\nonumber \E[R_T] & = \sum_{i = 1}^d \sum_{t \in T_i} \mu[(\sigma_i\x_i^\top)\w_t - (\sigma_i\x_i^\top)\w^*] = \sum_{i = 1}^d \mu T_d \big[\E[\sigma_i\x_i^\top \bw_i] - (\sigma_i\x_i^\top)\w^*\big]\\
\nonumber & = \sum_{i = 1}^d T_d \E\bigg[ \mu\sigma(i)[\bw_i(i) - \w^*(i)] \bigg]\\
\nonumber & = \sum_{i = 1}^dT_d \E\bigg[ \mu\sqrt d \w^*(i)[\w^*(i) - \bw_i(i)] \bigg] = \sum_{i = 1}^dT_d \E\bigg[ \mu\sqrt d \Big((\w^*(i))^2 - \bw_i(i)\w^*(i)\Big) \bigg] \\
\nonumber & = \sum_{i = 1}^dT_d \E\bigg[ \mu\sqrt d \Big(\frac{1}{d} + \frac{\sigma_i}{\sqrt d}\bw_i(i)\Big) \bigg] \\
& = \sum_{i = 1}^dT_d \bigg[ \frac{2\mu}{\sqrt d} Pr(\sigma_i\bw_i(i) > 0) \bigg] \text{ since } \bw_i(i) \in \{-1/\sqrt{d},1/\sqrt{d}\}
\end{align}

Now for any $i \in [d]$:
\begin{align*}
Pr\big(\sigma_i\bw_i(i) > 0\big) &= \frac{1}{2}Pr\big(\bw_i(i)>0 \mid \sigma_i = +1\big) + \frac{1}{2}Pr\big(\bw_i(i)<0 \mid \sigma_i = -1\big)\\
& = \frac{1}{2}\Big(Pr\big(\bw_i(i)>0 \mid \sigma_i = +1\big) + 1 - Pr\big(\bw_i(i)>0 \mid \sigma_i = -1\big) \Big)\\
& \ge \frac{1}{2}\Big( 1 - | Pr\big(\bw_i(i)>0 \mid \sigma_i = +1\big) - Pr\big(\bw_i(i)>0 \mid \sigma_i = -1\big) | \Big),
\end{align*}

\begin{assump}
\label{assump1}
For proving the lower bound we assume that $\bar \w_i(i)$ is a deterministic function of the observed function values $\{f_t\}_{t \in T_i}$, respectively at $\{\w_t\}_{t \in T_i}$. Note that this assumption is without
loss of generality, since any random querying strategy can be seen as a randomization over deterministic querying strategies. Thus, a lower bound which holds uniformly for any
deterministic querying strategy would also hold over a randomization. Let us denote: $f([T_i]) = \{f_t\}_{t \in T_i}$.
%{[[I believe it is okay to assume: For any phase $T_i$, $\w_{t}(i)$ is only a function of $\{f_\tau\}_{\tau \in [t-1]\setminus T_{i-1}}$, as the learner can not gain any information how to choose $\w_t(i)$ outside $T_i$??]]}
%(Assume $\w_{t+1}$ is a deterministic function of $\{\w_{\tau},g_\tau\}_{\tau = 1}^{t-1}$ for all $t \in [T_i-1]$)
\end{assump}

Then since the randomness of $\bar \w_i(i)$ only depends on $f([T_i])$, applying Pinsker's inequality, we get:

\begin{align*}
Pr \big(\sigma_i\bw_i(i) > 0\big) & \ge \frac{1}{2}\Big( 1 - \big| Pr\big(\sigma_i\bw_i(i)>0 \mid \sigma_i = +1\big) - Pr\big(\sigma_i\bw_i(i)<0 \mid \sigma_i = -1\big) \big| \Big)\\
& \ge \frac{1}{2}\bigg(1- \sqrt{2 KL\Big( P( f([T_i]) \mid \sigma_i=+1 ) || P(f([T_i]) \mid \sigma_i=-1)} \bigg)
\end{align*}

and further applying the chain rule of KL-divergence, we have:

\begin{align*}
Pr & \big(\sigma_i\bw_i(i) > 0\big) \ge \frac{1}{2}\bigg(1- \sqrt{2 \sum_{t \in T_i} KL\Big( P(f_t \mid \sigma_i=+1, \{f_\tau\}_{\tau \in [t-1] \setminus T_{i-1}}) || P(f_t \mid \sigma_i=-1, \{f_\tau\}_{\tau \in [t-1] \setminus T_{i-1}})} \bigg)\\
& \ge \frac{1}{2}\bigg(1-\sqrt{2\sum_{t \in T_i}\frac{4\mu^2\sigma_i^2\w_t(i)^2}{\frac{2}{16}}}\bigg) = \frac{1}{2}\bigg(1-\sqrt{\frac{64\mu^2 T_d}{d}}\bigg) \text{ since } \w_t(i)^2 = \frac{1}{d} \text{ and } \sigma_i^2 = 1
\end{align*}
where the last inequality follows by noting $P(f_t \mid \sigma_i, \{f_\tau\}_{\tau \in [t-1] \setminus T_{i-1} }) \sim \cN( \mu\sigma_i\w_t(i), \frac{1}{16})$, and \\ $KL(\cN(\mu_1,\sigma^2)||\cN(\mu_2,\sigma^2)) = \frac{(\mu_1-\mu_2)^2}{2\sigma^2}$ (for bounding the each individual KL-divergence terms).

\textbf{Case $1$ $(d \le 16\sqrt{T})$}

Combining the above claims with Eq. \eqref{eq:eq1}:
\begin{align*}
\E[R_T] & = \sum_{i = d}T_d \bigg[ \frac{2\mu}{\sqrt d} Pr(\sigma_i\bw_i(i) > 0) \bigg] \ge \sum_{i = d}T_d \bigg[ \frac{\mu}{\sqrt d} \big(1-8\mu\sqrt{\frac{T_d}{d}}\big)  \bigg],\\
& \ge \sum_{i = d}T_d \frac{1}{ 16 \sqrt T_d}\Big(1-\frac{1}{2}\Big) \text{ \bigg(setting } \mu  = \frac{\sqrt{d}}{16\sqrt T_d} \le 1\bigg) = \frac{\sqrt{d T}}{32}.
\end{align*}
Note that for any $t \in [T]$, $f_t$ s are $1$-lipschitz for $d \le 16\sqrt{T}$, as desired to understand the dependency of lower bound to the lipschitz constant.

\textbf{Case $2$ $(d > 16\sqrt{T})$}

In this case $T < \frac{d^2}{256}$.
Let us denote $d' = 16\sqrt{T} < d$, and let us use the above problem construction for dimension $d'$ (we can simply ignore decision coordinates $\w(d'+1), \ldots, \w(d)$, i.e. for any $\w \in \cW \subseteq \R^d$, denoting $\w_{[d']} = (\w_1,\ldots,\w_{d'})$, we can construct $f_t(\w) = f_t(\w_{[d']})$).

Now for the above problem suppose there exists an algorithm $\cA$ such that $\E[R_T(\cA)] \le \frac{\sqrt{d'T}}{32} = \frac{T^{3/4}}{32}$, then this violates the lower bound derived in \textbf{Case $1$}. Thus the lower bound for \textbf{Case $2$} is must be at least $\frac{T^{3/4}}{32}$.

Combining the lower bounds of \textbf{Case $1$ and $2$} concludes the proof.
\end{proof}

%\section{Appendix for Sec. \ref{sec:algo}}

%	\subsection{Proof of Theorem  \ref{thm:ub} }
%	
%	Combining claims of Lemma \ref{thm:ubkexp} and Lemma \ref{thm:ubogd}.

\subsection{Proof of Lemma~\ref{thm:ubkexp} and additional claims}

%\linubkrnl*

\iffalse %%%%%%%%%%%%%%%%%%%%%%%%%%%%%%%%%%%%%%%%
\begin{lem}[Regret bound for Algorithm \ref{algo:kexp}]
	\label{thm:ubkexpa}
	If the losses $\loss_t: \predspace \to [0, C]$ and $\pred_t$, $t \in [T]$ satisfy \emph{\textbf{(A1) (i)}}, \emph{\textbf{(A2)}}, and  \emph{\textbf{(A3) (i)}}, then for the choice of $\{\K_t\}_{t \in [T]}$ as defined in Definition \ref{def:kernel}, the expected regret of Algorithm \ref{algo:kexp} with learning rate $\eta = \Big(\frac{2d\log(L'T)}{B C^2T}\Big)^{\frac{1}{2}}$ can be bounded as:
	\[
	\E[\regret_T]\le 4 + 2\sqrt 2\Bigg(\sqrt{d B C^2T\log(L ' T)} \Bigg) = O\Big( C\sqrt{dT} \log (L'T) \Big)
	\]
	where $B  = 2\Big(1 + \ln LT  + \ln\Big(\frac{\beta_\cW - \alpha_\cW}{2}\Big)\Big)$, $L' = L (\beta_\cW - \alpha_\cW)$, and the expectation $\E[\cdot]$ is with respect to the algorithm's randomization.
\end{lem}

\fi %%%%%%%%%%%%%%%%%%%%%%%%%%%%%%%%%%%%%%%%%%%%%
%%%%%%%%%%%%%%%%%%%%%%%%%%%%%%%%%%%%%%%%%%%%%%%%%%%%%%%%

\textbf{Useful definitions and notation. } Before proceeding to the proof, we define relevant notation that will be used throughout this section. For the kernel $\K_t'$ (Definition \ref{def:kernel}), we define a linear operator $\K_t^{'*}$ on the space of functions $\predspace_t \mapsto \R$ as follows. For any function $\ell: \predspace_t \mapsto \R$:
\begin{equation}
	\label{eq:ktadj}
	\K_t^{'*}\ell(y) := \int_{y' \in \predspace_t} \ell(y') \K_t'(y',y) dy ~~~ \forall y \in \predspace_t,
\end{equation}
%In other words, $\K_t^* \loss(\cdot)$ is a linear combination of functions $\K_t'(x, \cdot)$ with weights given by the function values $\ell(x)$.
We also denote by $\cP$ and $\cQ_t$ the set of all probability measures on $\cW$ and $\cG_t$ respectively; and by $\bdelta_{y} \in \cQ_t$, $\bdelta_{\w} \in \cP$ the dirac mass at $y \in \cG_t$ and at $\w \in \cW$ respectively. For $\q \in \cQ_t$, define: 
\begin{equation*}
\big< \q, \ell \big> = \int_{y \in \predspace_t} \ell(y) \q (y) dy 
\end{equation*}
As noted in \cite{Bubeck17}, a useful observation on the operator \eqref{eq:ktadj} is that for any $\q \in \cQ_t$:
\begin{align}
\label{eq:ktadjprop}
\big< \K'_t\q,\loss_t \big > = \big< {\K'_t}^*\loss_t,\q \big>.
\end{align}

\textbf{Proof of Lemma \ref{thm:ubkexp}. }

\begin{proof}
For ease, we abbreviate $g_t(\w_t; \x_t)$ as $g_t(\w_t)$ throughout the proof.
%Let us define $\cG_t = \{ \pred_t(\w) \mid \w \in \cW \} \subseteq \R$, for any $t \in [T]$, and let $\cW_t(y) = \{\w \in \cW \mid \pred_t(\w) = y\}$, for $y \in \cG_t$.
We start by analyzing the expected regret w.r.t. the optimal point $\w^* \in \cW$ (denote $y_t^* = \pred_t(\w^*)$ for all $t \in [T]$). Define $\forall y \in \cG_t, ~\tilde \ell_t(y) := \tf_t(\w)$, for any $\w \in \cW(y)$.
Also let $\mathcal H_t = {\boldsymbol \sigma}\big(\{\x_\tau,\p_\tau,\w_\tau,f_\tau\}_{\tau = 1}^{t-1} \cup \{\x_t,\p_t\}\big)$ denote the sigma algebra generated by the history till time $t$. 
Then the expected cumulative regret of Algorithm \ref{algo:kexp} over $T$ time steps can be bounded as:
%Then note that Lemma \ref{lem:p_prime} holds.
%noting that $1/\lambda_{y_t} \le 2$  (from Property $\#$2 of Lemma \ref{lem:krnl_prop})
\begin{align}
\label{eq:reg_kp}
\nonumber &\E[R_T(\w^*)] := \E\bigg[\sum_{t = 1}^T\Big(f_t(\w_t) - f_t(\w^*)\Big)\bigg] = \E\bigg[\sum_{t = 1}^T\Big(\ell_t(g_t(\w_t)) - \ell_t(g_t(\w^*))\Big)\bigg]\\
\nonumber &= \E\bigg[\sum_{t = 1}^T\Big(\ell_t(y_t) - \ell_t(y_t^*)\Big)\bigg] = \E\bigg[\sum_{t=1}^{T}\big<\K'_t\q_t - \bdelta_{y^*_t}, \loss_t \big>\bigg] ~~[\text{since } y_t \sim K_t'\q_t]\\
\nonumber & \le \E\bigg[\sum_{t=1}^{T}\frac{3\epsilon L}{\lambda} + \frac{1}{\lambda} \big< \K'_t(\q_t - \bdelta_{y^*_t}), \loss_t \big>\bigg] ~~\big[ \text{from Property}\#2 \text{ of Lemma } \ref{lem:krnl_prop} \big]\\
\nonumber & \le 6\epsilon L T + 2\sum_{t=1}^{T}\E\bigg[\big< \K'_t(\q_t - \bdelta_{y^*_t}), \loss_t \big>\bigg] ~~\big[ \text{we can choose } \lambda = 1/2, \text{ see proof of Lemma } \ref{lem:krnl_prop}\big]\\
\nonumber & \overset{\text{by } \eqref{eq:ktadjprop}}= 6\epsilon L T  + 2\sum_{t=1}^{T}\E\bigg[\sum_{t=1}^{T}\big< {\K'}^*_t\loss_t,(\q_t - \bdelta_{y^*_t})\big>\bigg] \\
\nonumber & \overset{(a)}{=} 6\epsilon L T  + 2\sum_{t=1}^{T}\E\bigg[\sum_{t=1}^{T}\E_{y_t \sim \K'_t\q_t}\Big[\big< \q_t - \bdelta_{y^*_t}, \tilde \loss_t \big> \mid \mathcal H_t \Big]\bigg] \\
& = 6\epsilon L T + 2\sum_{t=1}^{T}\E\bigg[\sum_{t=1}^{T}\E_{y_t \sim \K'_t\q_t}\Big[\big< \p_t - \bdelta_{\w^*}, \tilde f_t \big> \mid \mathcal H_t  \Big]\bigg]
\end{align}
where the last equality follows by Lemma~\ref{lem:p_eqv}, and by $\big< \bdelta_{\w^*}, \tf_t \big> = \tf_t(\w^*) = \tilde \loss_t(y^*_t) = \big< \bdelta_{y^*_t}, \tilde \loss_t \big>$; the penultimate equality $(a)$ follows noting that for any $y' \in \cG_t$:
\[
\E_{y_t \sim \K'_t\q_t} [\tilde \loss_t(y')] = \int_{y_t \in \cG_t}\K'_t\q_t(y_t)\frac{\loss_t(y_t)}{\K'_t\q_t(y_t)}\K'_t(y_t,y')dy_t = \int_{y_t \in \cG_t}\loss_t(y_t)\K'_t(y_t,y')dy_t = {\K'_t}^*\loss_t(y').
\]
Let us denote by $\p^*$ a uniform measure on the set $\cW_\kappa:=\{\w \mid \w = (1-\kappa)\w^* + \kappa \w', \text{ for any } \w' \in \cW\}$ for some $\kappa \in (0,1)$. Note, this implies $\p^*(\w) = \begin{cases}
	\frac{1}{\kappa^d\text{vol}(\cW)}, ~~\text{ if } \w \in \cW_\kappa\\
	0 ~~\text{otherwise}
\end{cases}$.
 
Then note that: 
\begin{align*}
	\sum_{t=1}^{T}&\E_{y_t \sim \K'_t\q_t}\big< \p_t - \bdelta_{\w^*}, \tilde f_t \big>  = \sum_{t=1}^{T}\E_{y_t \sim \K'_t\q_t}[\big< \p_t ,\tilde f_t \big> - \big< \bdelta_{\w^*}, \tf_t \big>]\\
	& \overset{(a)}{=} \sum_{t=1}^{T}\E_{y_t \sim \K'_t\q_t}[\big< \p_t ,\tilde f_t \big>] - {\K'_t}^*\loss_t(g_t(\w^*))  \\
	& \overset{(b)}{\le} \sum_{t=1}^{T}\E_{y_t \sim \K'_t\q_t}[\big< \p_t ,\tilde f_t \big>] + \sum_{t = 1}^T\Big[ \kappa L D W - \big<{\p^*,\K'_t}^*\loss_t(g_t(\cdot))\big>\Big]  \\
	& = \sum_{t=1}^{T}\E_{y_t \sim \K'_t\q_t}[\big< \p_t ,\tilde f_t \big> - \big<\p^*,\tilde f_t\big>] + \kappa L D W T 
\end{align*}
where $(a)$ follows since $\E_{y_t \sim \K'_t\q_t} \big< \bdelta_{\w^*}, \tf_t \big> = \E_{y_t \sim \K'_t\q_t} [\tf_t(\w^*)] = \E_{y_t \sim \K'_t\q_t} [\tilde \loss_t(g_t(\w^*))] = {\K'_t}^*\loss_t(g_t(\w^*))$ as shown above; $(b)$ follows since by assumption $g_t$ is $D$ lipschitz and so by definition of $\cW_\kappa$ for any $\w \in \cW_\kappa$ we have $|g_t(\w) - g_t(\w^*)| \le D W$ (since $W = \text{Diam}(\cW)$). But from the Property \#$1$ of Lemma \ref{lem:krnl_prop} we have that the function ${\K'_t}^* \loss_t(\cdot)$ is $L$-lipschitz, which in turn implies for any $\w \in \cW_\kappa$, $|{\K'_t}^* \loss_t(g_t(\w)) - {\K'_t}^* \loss_t(g_t(\w^*))| \le L |g_t(\w) - g_t(\w^*)| \le \kappa L D W$. 
The last equality follows by applying the reverse logic used for $(a)$.

Combining above claims with \eqref{eq:reg_kp} we further get:
%Now taking expectation on $\w^* \sim \p^*$, from \eqref{eq:reg_kp} we get:
 \begin{align}
\label{eq:reg_kp2}
\E[R_T(\w^*)] \le 6\epsilon LT + 2\Bigg(  \kappa L D W T + \E\bigg[\sum_{t=1}^{T}\E_{y_t \sim \K'_t\q_t}\Big[\big< \p_t - \p^*, \tilde f_t \big> \mid \mathcal H_t  \Big]\bigg]\Bigg).
\end{align}

From Lemma \ref{lem:elem} we get:
\begin{align}
\label{eq:reg_kp5}
\sum_{t=1}^{T}\big< \p_t - \p^*, \tf_t \big> \le \dfrac{KL(\p^*||\p_1)}{\eta} + \frac{\eta}{2}\big< \p_t, \tf_t^2\big> = \dfrac{KL(\p^*||\p_1)}{\eta} + \frac{\eta}{2}\big< \q_t, \tilde \loss_t^2\big>,
\end{align}
where the equality $\big< \p_t, \tf_t^2\big> = \big< \q_t, \tilde \loss_t^2\big>$ follows from a similar derivation as shown in Lemma \ref{lem:p_eqv}. Now, note that:

\begin{align}
\label{eq:reg_kp3}
\nonumber \E_{y_t \sim \K'_t\q_t}\Big[\big< \q_t, \tilde \loss_t^2\big>\Big] & = \int_{y_t \in \cG_t}\hspace{-5pt}\K'_t\q_t(y_t)\big<\q_t, \tilde \loss_t^2\big>dy_t \\
\nonumber & = \int_{y_t \in \cG_t}\hspace{-5pt}\K'_t\q_t(y_t)\Big[ \int_{y \in \cG_t}\hspace{-5pt}\q_t(y)\frac{(\loss_t(y_t))^2}{(\K'_t\q_t(y_t))^2}(\K'_t(y_t,y))^2dy\Big]dy_t\\
& \le C^2\int_{y_t \in \cG_t}\dfrac{\K_t'^{(2)}\q_t(y_t)}{\K'_t\q_t(y_t)}dy_t \le BC^2,
\end{align}
where the last inequality follows from Property \#$3$ of Lemma \ref{lem:krnl_prop} with $B = 2\Big(1 + \ln \frac{1}{\epsilon}  + \ln\Big(\beta_\cW - \alpha_\cW\Big)\Big)$. 

Finally, by definition of $\p^*$, we can bound the KL divergence term as:
\begin{align}
\label{eq:reg_kp4}
KL(\p^*||\p_1) = d \log \frac{1}{\kappa}
\end{align}

Substituting \eqref{eq:reg_kp3} and \eqref{eq:reg_kp4} in \eqref{eq:reg_kp5}, letting $L' = L D W$, and setting $\kappa = \frac{1}{L'T}$, $\epsilon = \frac{1}{3L T}$,~\eqref{eq:reg_kp2} yields:
\begin{align*}
\E[R_T(\w^*)] & \leq 2 + 2\Bigg(1 + \dfrac{KL(\p^*||\p_1)}{\eta} + \frac{\eta}{2}\E\bigg[\sum_{t=1}^{T}\E_{y_t \sim \K'_t\q_t} \big< \q_t, \tilde \loss_t^2\big> \mid \mathcal H_t \bigg] \Bigg)\\
& =  4 + 2\Bigg(\frac{d\log L' T}{\eta}  + \frac{\eta B C^2T}{2} \Bigg)\\
& = 4 + 2\sqrt 2\Bigg(\sqrt{d B C^2T\log(L ' T)} \Bigg),
\end{align*}
where the last equality follows by choosing $\eta = \Big(\frac{2d\log(L' T)}{B C^2T}\Big)^{\frac{1}{2}}$. This concludes the proof. %Finally noting that $B = 2\Big(1 + \ln \frac{1}{\epsilon} \Big) + \ln 2$ yields the desired result.
\end{proof}

%The following lemmas would be useful towards proving our main result:
\paragraph{Statements and proofs of additional lemmas used above:}

%\iffalse %%%%%%%%%%%%%%%%%%%%%%%%%%%%%%%%%%%%%%%%
\begin{lem}
\label{lem:validp}
In Algorithm \ref{algo:kexp}, at any round $t$, both $\q_t \in \cQ_t$ and $\K_t'\q_t \in \cQ_t$. 
%\textbf{(ii)} $\E_{\w \sim \p_t}[g_t(\w)] = \E_{y \sim \q_t}[y]$
\end{lem}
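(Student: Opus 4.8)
The statement is a routine sanity check, so the plan is simply to verify, for each of $\q_t$ and $\K'_t\q_t$, the two defining properties of a probability measure on $\cG_t$: nonnegativity and unit total mass. I would run the argument by induction on $t$, so that at round $t$ I may take for granted that $\p_t\in\cP$ is a valid probability measure on $\cW$. The base case is clear ($\p_1$ is the uniform measure on $\cW=\cB_d(W)$, which has finite positive volume), and the inductive step is immediate from Line~10 of Algorithm~\ref{algo:kexp}: $\p_{t+1}$ is defined as $\p_t(\w)\exp(-\eta\tf_t(\w))$ divided by the explicit normalising constant $\int_{\tw}\p_t(\tw)\exp(-\eta\tf_t(\tw))\,d\tw$ (which is finite and strictly positive since $0\le\tf_t<\infty$), so $\p_{t+1}$ is manifestly nonnegative and integrates to $1$.

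\textbf{Step 1: $\q_t\in\cQ_t$.} Here the key observation is that the disintegration formula $d\q_t(y):=\int_{\cW_t(y)}d\p_t(\w)$ is precisely the statement that $\q_t$ is the pushforward (image) measure of $\p_t$ under the map $\phi_t:\cW\to\cG_t$, $\phi_t(\w)=\pred_t(\w;\x_t)$, since $\cW_t(y)=\phi_t^{-1}(\{y\})$; equivalently $\q_t(A)=\p_t(\phi_t^{-1}(A))$ for every measurable $A\subseteq\cG_t$. ($\phi_t$ is measurable because, by \textbf{(A3)(ii)}, $\pred_t(\cdot;\x_t)$ is $D$-Lipschitz hence continuous.) The pushforward of a probability measure is a probability measure: nonnegativity is inherited, and $\q_t(\cG_t)=\p_t(\phi_t^{-1}(\cG_t))=\p_t(\cW)=1$. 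Hence $\q_t\in\cQ_t$.

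\textbf{Step 2: $\K'_t\q_t\in\cQ_t$.} By Definition~\ref{def:kernel} the kernel is $\R_+$-valued, $\K'_t(y,y')\ge 0$, so $\K'_t\q_t(y)=\int_{y'\in\cG_t}\K'_t(y,y')\,d\q_t(y')\ge 0$ for all $y\in\cG_t$. For the total mass I would invoke Tonelli's theorem (legitimate since the integrand is nonnegative) to swap the order of integration:
\[
\int_{y\in\cG_t}\K'_t\q_t(y)\,dy=\int_{y\in\cG_t}\!\!\int_{y'\in\cG_t}\!\!\K'_t(y,y')\,d\q_t(y')\,dy=\int_{y'\in\cG_t}\Big(\int_{y\in\cG_t}\K'_t(y,y')\,dy\Big)d\q_t(y').
\]
The inner integral equals $1$ for every $y'\in\cG_t$ — this is the per-$y'$ normalisation of $\K'_t$ recorded right after Definition~\ref{def:kernel}, which one checks directly by splitting into the two cases $|y'-\bar y|\ge\epsilon$ and $y'\in[\bar y-\epsilon,\bar y+\epsilon]$. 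Therefore the displayed quantity equals $\int_{\cG_t}1\,d\q_t(y')=\q_t(\cG_t)=1$ by Step~1, so $\K'_t\q_t\in\cQ_t$.

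\textbf{Main obstacle.} There is no genuine mathematical difficulty; the only points requiring care are (i) reading the disintegration notation for $\q_t$ correctly as a pushforward measure, which is what makes it automatically a probability measure; (ii) justifying the interchange of integrals via Tonelli and quoting the already-established normalisation $\int_{\cG_t}\K'_t(y,y')\,dy=1$; and (iii) noting in passing that the kernel is well-defined in the first place, i.e.\ that $\bar y=\E_{\q_t}[y]$ lies in $\cG_t$ — this holds because $\cG_t$ is the continuous image of the convex (hence connected) set $\cW$, so it is an interval (indeed a compact interval, as $\cW$ is compact and $\pred_t(\cdot;\x_t)$ continuous), and the mean of a distribution supported on an interval lies in that interval.
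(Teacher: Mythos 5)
Your proposal is correct and follows essentially the same route as the paper's proof: establish that $\p_t$ is a valid probability measure (initialization plus the normalized exponential-weights update), deduce that $\q_t$ has nonnegative density and unit mass by marginalizing $\p_t$ over the fibers $\cW_t(y)$, and then obtain $\K'_t\q_t \in \cQ_t$ by swapping the order of integration and using the per-$y'$ normalization $\int_{\cG_t}\K'_t(y,y')\,dy = 1$. Your extra touches (the pushforward framing, the explicit appeal to Tonelli, and the observation that $\bar y \in \cG_t$) only make explicit what the paper leaves implicit.
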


\begin{proof}
Firstly note that, $\p_1 \in \cP$ simply by its initialization, and for any subsequent iteration $t=2,3,\ldots, T$, $\p_t \in \cP$ by its update rule.

Now for any $t \in [T]$ and $y \in \cG_t$, by definition $\q_t(y) > 0$, as $\p_t \in \cP$. The only remaining thing to prove is that $\int_{\cG_t}d\q_t(y) = 1$, which simply follows as:
\[
\int_{y \in \cG_t}\q_t(y)dy = \int_{y \in \cG_t}\int_{\cW_t(y)}\p_t(\w)d\w = \int_{\cW}\p_t(\w)d\w = 1 ~~[\text{since } \p_t \in \cP].
\]

Now, consider $\K_t'\q_t$. By definition, $\forall y \in \cG_t, \K_t'\q_t(y) = \int_{\predspace_t}\K_t'(y,y')d\q_t(y') > 0$ since by construction $\K_t'(y,\cdot) > 0$ and $\q_t 
\in \cQ_t$. Further, since $\int_{\predspace_t}\K_t'(y,y')d y = 1$ for every $y' \in \predspace_t$ (by construction), it is easy to show $\int_{\predspace_t}\K_t\q_t(y)dy = 1$ as follows:
\[
\int_{\predspace_t}\K_t'\q_t(y) d y = \int_{\predspace_t}\Big[ \int_{\predspace_t}\K_t'(y,y')d\q_t(y') \Big] d y = \int_{\predspace_t}\Big[ \int_{\predspace_t}\K_t'(y,y')d y \Big]d\q_t(y') = \int_{\predspace_t}d\q_t(y') = 1.
\]
%For \textbf{(ii)}, we see that: $\E_{\w \sim \p_t}[g_t(\w)] = \int_{\w \in \cW}\p_t(\w)g_t(\w)d\w = \int_{y \in \cG_t}\int_{\w \in \cW_t(y)} \p_t(\w)g_t(\w)d\w = \int_{y \in \cG_t}y \q_t(y)dy = \E_{y \sim \q_t}[y]$.
\end{proof}
%\fi %%%%%%%%%%%%%%%%%%%%%%%%%%%%%%%%%%%%%%%%%%%%%%%

%\iffalse %%%%%%%%%%%%%%%%%%%%%%%%%%
\begin{restatable}[]{lem}{kpeqv}
\label{lem:p_eqv}
At any round $t \in [T]$ of Algorithm \ref{algo:kexp},
$
\big<\p_t,\tf_t\big> = \big<\q_t,\tilde \loss_t\big>.
$
\end{restatable}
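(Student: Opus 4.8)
The plan is to reduce the identity to a single structural fact: the loss estimator $\tf_t$, viewed as a function on $\cW$, depends on $\w$ only through the scalar $g_t(\w;\x_t)$, and $\q_t$ is, by construction, precisely the law of that scalar when $\w\sim\p_t$. First I would make the first point explicit. By the estimator definition (Line 9 of Algorithm~\ref{algo:kexp}), $\tf_t(\w)=\tfrac{\f_t(\w_t)}{\tK_t'\q_t(y_t)}\K_t'(y_t,y)$ for every $\w\in\cW_t(y)$, which manifestly depends on $\w$ only through $y=g_t(\w;\x_t)$ — this is exactly the ``$\tilde f_t(\w)$ is the same for all $\w\in\cW_t(y)$'' remark made right after the estimator is introduced, and it is what makes $\tilde\loss_t(y):=\tf_t(\w)$, $\w\in\cW_t(y)$, well defined. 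Consequently we have the pointwise factorization $\tf_t(\w)=\tilde\loss_t\big(g_t(\w;\x_t)\big)$ for all $\w\in\cW$.

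Next I would invoke the change-of-variables identity for pushforward measures. The definition $d\q_t(y):=\int_{\cW_t(y)}d\p_t(\w)$ in Line 5 says precisely that $\q_t$ is the distribution of the random scalar $g_t(\w;\x_t)$ induced by $\w\sim\p_t$; equivalently, for every integrable $h:\cG_t\to\R$,
\[
\int_{\cW} h\big(g_t(\w;\x_t)\big)\,\p_t(\w)\,d\w \;=\; \int_{\cG_t} h(y)\,\q_t(y)\,dy .
\]
Applying this with $h=\tilde\loss_t$ and using the factorization from the previous step,
\[
\big<\p_t,\tf_t\big> \;=\; \int_{\cW}\tf_t(\w)\,\p_t(\w)\,d\w \;=\; \int_{\cW}\tilde\loss_t\big(g_t(\w;\x_t)\big)\,\p_t(\w)\,d\w \;=\; \int_{\cG_t}\tilde\loss_t(y)\,\q_t(y)\,dy \;=\; \big<\q_t,\tilde\loss_t\big>,
\]
which is the claim. (One can also see this ``by slicing'': integrating over the level sets $\cW_t(y)$, the constant $\tilde\loss_t(y)$ pulls out of the inner integral and what remains is $d\q_t(y)$ by definition — this is the informal version matching the notation of Algorithm~\ref{algo:kexp}.)

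The closest thing to an obstacle is the measure-theoretic bookkeeping of the slicing / disintegration argument, but there is essentially nothing to overcome here: the definition of $\q_t$ in the algorithm is stated exactly so that it is the pushforward of $\p_t$ under $\w\mapsto g_t(\w;\x_t)$, so no coarea-Jacobian factors or regularity hypotheses on $g_t$ enter. The same two-line computation, applied to the function $\tf_t^2=\tilde\loss_t^2\circ g_t$, also yields the companion identity $\big<\p_t,\tf_t^2\big>=\big<\q_t,\tilde\loss_t^2\big>$ that is used in~\eqref{eq:reg_kp5}.
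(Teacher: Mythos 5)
Your proposal is correct and is essentially the paper's own argument: the paper proves the identity by the slicing computation you mention parenthetically (splitting $\int_{\cW}$ into $\int_{\cG_t}\int_{\cW_t(y)}$, using that $\tf_t$ is constant equal to $\tilde\loss_t(y)$ on each level set $\cW_t(y)$, and recognizing the inner integral as $d\q_t(y)$), which is the same pushforward/change-of-variables fact you state. Your added remark that the same computation gives $\big<\p_t,\tf_t^2\big>=\big<\q_t,\tilde\loss_t^2\big>$ matches the paper's use of this identity in the regret analysis.
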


\begin{proof}
The claim follows from the straightforward analysis:
\begin{align*}
\big<\p_t,\tf_t\big> & = \int_{\w \in \cW}\p_t(\w)\tf_t(\w)d\w = \int_{y \in \cG_t}\int_{\w \in \cW_t(y)}\p_t(\w)\tf_t(\w)d\w\\
& = \int_{y \in \cG_t}\int_{\w \in \cW_t(y)}\p_t(\w)\tilde \loss_t(y)d\w
= \int_{y \in \cG_t}\tilde \loss_t(y)\int_{\w \in \cW_t(y)}\p_t(\w)d\w \\
& = \int_{y \in \cG_t}\tilde \loss_t(y)\q_t(y)dy = \big<\q_t,\tilde \loss_t\big>.
\end{align*}
\end{proof}

\iffalse %%%%%%%%%%%%%%%%%%%%%%%%%%%%%%%%%

\begin{restatable}[]{lem}{kpeqv}
\label{lem:kp_eqv}
At any round $t \in [T]$, $\big< \K_t\p_t,f_t\big> = \big< \K'_t\q_t,\ell_t\big>$.
\end{restatable}
%\textbf{(i)} $\K'_t\q_t(y) = \K_t\p_t(\w)$ for any $\w \in \cW$, $y = \pred_t(\w)$.

\begin{proof}
Claim follows since
\begin{align*}
\big< \K_t\p_t,f_t\big> & = \int_{\w \in \cW}\K_t\p_t(\w)f_t(\w)d\w = \int_{y \in \cG_t}\int_{\w \in \cW_t(y)}\K_t\p_t(\w)f_t(\w)d\w = \int_{y \in \cG_t}\loss_t(y)\int_{\w \in \cW_t(y)}\bigg[ \K_t\p_t(\w) \bigg]d\w\\
& = \int_{y \in \cG_t}\loss_t(y)\int_{\w \in \cW_t(y)}\bigg[ \int_{y' \in \cG_t} \int_{\w' \in \cW_t(y')}\K_t(\w,\w')d\p_t(\w') \bigg]d\w\\
& = \int_{y \in \cG_t}\loss_t(y)\int_{\w \in \cW_t(y)}\bigg[ \int_{y' \in \cG_t} \K_t(y,y') d\q_t(y') \bigg]d\w\\
& = \int_{y \in \cG_t}\loss_t(y)\bigg[\int_{y' \in \cG_t} \K_t(y,y') d\q_t(y')\bigg]dy = \int_{y \in \cG_t}\loss_t(y)\bigg[\K'_t\q_t(y)\bigg]dy = \big< \K'_t\q_t,\ell_t\big>
\end{align*}
\end{proof}

\fi %%%%%%%%%%%%%%%%%%%%%%%%%%%%%%%%%%%%%%%%%5

%\begin{restatable}[Properties of $\K'_t$ \cite{Bubeck17}]{lem}{kprop}
%\label{lem:krnl_prop_lin}
%For any convex and $L$-Lipschitz function ($L>0$) $\loss_t:\cG_t \mapsto [0,C]$, $\q \in \cQ_t$, and $y \in \cG_t$, at any round $t$, the kernel $\K'_t: \cG_t\times\cG_t \mapsto \R_+$ (where recall $\cG_t = [-1,1]$) satisfies the following three properties:
%
%\end{restatable}
%
%\begin{proof}
%The proof follows by similar derivations used  in \cite{Bubeck17} (see Sec. $3$).
%\end{proof}

%\fi %%%%%%%%%%%%%%%%%%%%%%%%%%%%%%%%%%%%%%%%%

\begin{restatable}[]{lem}{elem}
\label{lem:elem}
Consider any sequence of functions $f_1, f_2, \ldots f_T$ such that $f_t: \cD \mapsto \R$ for all $t \in [T]$, $\cD \subset \R^d$  for some $d \in \N_+$. Suppose  $\cP$ denotes the set of probability measure over $\cD$. 
Then for any $\p \in \cP$, and given any $\p_1 \in \cP$, the sequence $\{\p_t\}_{t = 2}^{T}$ is defined as $\p_{t+1}(\w) := \dfrac{\p_t(\w)\exp\big( -\eta \f_t(\w) \big)}{\int_\tw \p_t(\tw)\exp\big( -\eta \f_t(\tw) \big)d\tw }$, for all $\w \in \cD$. Then it can be shown that:
\[
\sum_{t=1}^{T}\big< \p_t - \p, f_t \big> \le \dfrac{KL(\p||\p_1)}{\eta} + \frac{\eta}{2}\sum_{t=1}^{T}\big< \p_t, f_t^2\big>,
\]
where $KL(\p||\p_1)$ denotes the KL-divergence between the two probability distributions $\p$ and $\p_1$.
\end{restatable}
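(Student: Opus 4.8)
This is the standard potential-function (mirror-descent / exponential-weights) regret bound, so the plan is to take the Kullback--Leibler divergence to the fixed comparator $\p$ as the potential and bound its one-step decrease from below by the instantaneous ``regret'' $\langle \p_t - \p, f_t\rangle$, then telescope. Concretely, set $\Phi_t := KL(\p\|\p_t)$ and analyze $\Phi_t - \Phi_{t+1}$.

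First I would unfold the update rule: writing $Z_t := \int_{\tw} \p_t(\tw)\exp\big(-\eta f_t(\tw)\big)\,d\tw$ so that $\p_{t+1}(\w) = \p_t(\w)e^{-\eta f_t(\w)}/Z_t$, a direct computation gives
$$\Phi_t - \Phi_{t+1} = \int_{\w\in\cD} \p(\w)\,\ln\frac{\p_{t+1}(\w)}{\p_t(\w)}\,d\w = -\eta\,\langle \p, f_t\rangle - \ln Z_t.$$
The crux is to upper bound $\ln Z_t = \ln \E_{\p_t}\!\big[e^{-\eta f_t}\big]$. Using the elementary inequality $e^{-x}\le 1-x+\tfrac{x^2}{2}$ together with $\ln(1+z)\le z$, one gets
$$\ln Z_t \le \ln\!\Big(1 - \eta\langle\p_t, f_t\rangle + \tfrac{\eta^2}{2}\langle\p_t, f_t^2\rangle\Big) \le -\eta\langle\p_t, f_t\rangle + \tfrac{\eta^2}{2}\langle\p_t, f_t^2\rangle,$$
and substituting back yields $\Phi_t - \Phi_{t+1} \ge \eta\,\langle\p_t - \p, f_t\rangle - \tfrac{\eta^2}{2}\langle\p_t, f_t^2\rangle$.

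Then I would rearrange to $\eta\langle\p_t-\p,f_t\rangle \le \Phi_t - \Phi_{t+1} + \tfrac{\eta^2}{2}\langle\p_t,f_t^2\rangle$ and sum over $t=1,\dots,T$; the left-hand potential terms telescope to $\Phi_1 - \Phi_{T+1} = KL(\p\|\p_1) - KL(\p\|\p_{T+1}) \le KL(\p\|\p_1)$, using nonnegativity of the KL divergence (Gibbs' inequality). Dividing through by $\eta>0$ gives exactly $\sum_{t=1}^T\langle\p_t-\p,f_t\rangle \le KL(\p\|\p_1)/\eta + \tfrac{\eta}{2}\sum_{t=1}^T\langle\p_t,f_t^2\rangle$.

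The step needing care is the bound on the moment generating function: the quadratic inequality $e^{-x}\le 1-x+x^2/2$ holds for $x\ge 0$ but fails for $x<0$, so one must invoke $\eta f_t\ge 0$ — which is exactly the situation in Algorithm~\ref{algo:kexp}, where the loss estimates $\tilde f_t$ are nonnegative — or, alternatively, impose a smallness bound on $|\eta f_t|$ and use a two-sided estimate. A secondary, essentially bookkeeping, point is that all the integrals must be well-defined and the $\p_t$ must remain probability measures on $\cD$; this follows from the update once $\p_1\in\cP$ and $f_t$ is integrable against each $\p_t$ (and for the kernelized instantiation it is established in Lemma~\ref{lem:validp}).
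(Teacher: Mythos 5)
Your proposal is correct and follows essentially the same route as the paper's proof: the KL divergence to the comparator as potential, the update rule turning the one-step potential change into $-\eta\langle \p, f_t\rangle - \ln \E_{\p_t}[e^{-\eta f_t}]$, the bounds $e^{-s}\le 1-s+s^2/2$ and $\ln s \le s-1$ (for $s>0$, i.e.\ nonnegative losses, exactly the caveat you flag and which the paper also uses implicitly), then telescoping and dropping $KL(\p\|\p_{T+1})\ge 0$. No substantive difference from the paper's argument.
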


\begin{proof}
	We start by noting that by definition of KL-divergence:
	\[
	KL(\p||\p_t) - KL(\p||\p_{t+1}) = \int_{\cW}\p(\w)\ln \Big( \frac{\p_{t+1}(\w)}{\p_{t}(\w)}\Big) d\w.
	\]
Moreover, by definition of $\p_{t+1}$, $\frac{1}{\eta}\Big( KL(\p||\p_t) - KL(\p||\p_{t+1}) \Big) = \frac{1}{\eta}\bigg(\int_{\cW}\p(\w)\ln \Big( \frac{\p_{t+1}(\w)}{\p_{t}(\w)}\Big)\bigg) = -\E_{\p}[f_t(\w)] - \frac{1}{\eta}\ln \E_{\p_t}[e^{-\eta f_t(\w)}]$ for any $t = 1,2, \ldots, T$.
Then summing over $T$ rounds,
\[
\sum_{t=1}^{T} \Bigg[ -\E_{\p}[ f_t(\w)] - \frac{1}{\eta}\ln \E_{\p_t}[e^{-\eta f_t(\w)}] \Bigg] = \frac{1}{\eta}\Big( KL(\p||\p_1) - KL(\p||\p_{T+1})  \Big).
\]
Now adding $\sum_{t=1}^{T} f_t(\w_t)$ to both sides, this further gives:
\begin{align*}
 & \sum_{t=1}^{T}\Bigg[  f_t(\w_t) - \E_{\p}[ f_t(\w)]  \Bigg] = \frac{1}{\eta}\Big( KL(\p||\p_1) - KL(\p||\p_{T+1})  \Big) + \sum_{t=1}^{T}\Big( f_t(\w_t) + \frac{1}{\eta}\ln \E_{\p_t}[e^{-\eta f_t(\w)}]\Big)\\
& \hspace*{-0pt}\implies \sum_{t=1}^{T} \Bigg[  f_t(\w_t) - \E_{\p}[ f_t(\w)]  \Bigg] \le \frac{KL(\p||\p_1)}{\eta} + \sum_{t=1}^{T}\Big( f_t(\w_t) + \frac{1}{\eta}\ln \E_{\p_t}[e^{-\eta f_t(\w)}]\Big)\\
& \hspace*{-0pt}\implies \sum_{t=1}^{T} \E_{\w_t\sim\p_t}\Bigg[  f_t(\w_t) - \E_{\p}[ f_t(\w)]  \Bigg] \le \frac{KL(\p||\p_1)}{\eta} + \frac{1}{\eta}\sum_{t=1}^{T}\E_{\w_t\sim\p_t}\Bigg[\eta f_t(\w_t) + \ln \E_{\p_t}[e^{-\eta f_t(\w)}]\Bigg]\\
& \hspace*{-0pt}\implies \sum_{t=1}^{T} \Bigg[ \big<(\p_t-\p), f_t \big>  \Bigg] \le \frac{KL(\p||\p_1)}{\eta} + \frac{1}{\eta}\sum_{t=1}^{T}\E_{\w_t\sim\p_t}\Bigg [\eta  f_t(\w_t) +  \E_{\p_t}[e^{-\eta f_t(\w)}] - 1\Bigg] \\
& \hspace*{-10pt} \le  \frac{KL(\p||\p_1)}{\eta} + \frac{1}{\eta}\sum_{t=1}^{T}\E_{\w_t\sim\p_t} \Bigg[\eta f_t(\w_t)  + 1 -\eta \E_{\w \sim\p_t}[ f_t(\w)] + \E_{\w \sim\p_t}[ \frac{\eta^2f_t^2(\w)}{2}] - 1 \Bigg] \\
& \hspace*{-10pt} = \frac{KL(\p||\p_1)}{\eta} + \frac{\eta}{2}\sum_{t=1}^{T}\big<\p_t, f_t^2\big>,
\end{align*}
which concludes the proof. The last two inequalities above follow from $\ln s \le s - 1, ~ \forall s > 0$ and $e^{-s} \le 1-s+s^2/2, ~\forall s > 0$.
\end{proof}

\begin{restatable}[]{lem}{lemkrnl}
\label{lem:krnl_prop}
For any convex and $L$-Lipschitz function, $\loss:\cG_t \mapsto \R_+$, such that $\cG_t = [\alpha,\beta] \subseteq \R$, $\q \in \cQ_t$, and any $y \in \cG_t$, the kernel $\K'_t: \cG_t\times\cG_t \mapsto \R_+$ satisfies:
\begin{enumerate}
    \item The function ${\K'_t}^*\loss (\cdot)$ is $L$-Lipschitz. % for some $L'>0$.	
	\item ${\K'_t}^*\loss(y) \le (1-\lambda)\big<\K'_t\q,\loss\big> + \lambda \loss(y) + 3 \epsilon L$, where $\lambda$ is a constant.
	%or equivalently, $\big<\K'_t\q - \bdelta_{y}, \loss \big> \le \frac{1}{\lambda} \big<\K'_t(\q-\bdelta_{y}),\loss\big> + \frac{\epsilon L}{\lambda}$,
	\item For any $\q \in \mathcal \cQ_t$, define operator ${\K_t'}^{(2)}\q: \cG_t \mapsto \R$ as:
\[
{\K_t'}^{(2)}\q(y) := \int_{y' \in \predspace_t}(\K_t'(y,y'))^2d\q(y') ~~~ \forall y \in \predspace_t,
\] 
then $\displaystyle \int_{y \in \cG_t}\dfrac{{\K'_t}^{(2)}\q(y)}{\K'_t\q y}dy \le B$, where $B =  2\bigg(1 + \ln \frac{1}{\epsilon}  + \ln\Big(\beta - \alpha\Big)\bigg)$.
\end{enumerate}%$\epsilon = \epsilon' L$
\end{restatable}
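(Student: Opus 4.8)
The plan is to start by writing the operators involved in completely explicit form, after which all three properties drop out by inspection. Unrolling Definition~\ref{def:kernel} and~\eqref{eq:ktadj}: for $y$ with $|y-\bar y|\ge\epsilon$ one has $\K'_t(y',y)=\1(y'\in[\min(y,\bar y),\max(y,\bar y)])/|y-\bar y|$, so ${\K'_t}^*\loss(y)$ is exactly the average of $\loss$ over the segment joining $\bar y$ and $y$; and for $|y-\bar y|<\epsilon$, ${\K'_t}^*\loss(y)$ equals the constant $\frac1\epsilon\int_{\bar y-\epsilon}^{\bar y}\loss$. Likewise, for $y$ in the ``far'' region $y\ge\bar y+\epsilon$ (and symmetrically on the left), $\K'_t(\cdot,y)$ is supported on $\{y'\colon y'\ge y\}$ with value $(y'-\bar y)^{-1}$, so $\K'_t\q(y)=\int_{y'\ge y}(y'-\bar y)^{-1}d\q(y')$ and ${\K'_t}^{(2)}\q(y)=\int_{y'\ge y}(y'-\bar y)^{-2}d\q(y')$, while for $|y-\bar y|\le\epsilon$ all the kernel values that appear are at most $1/\epsilon$.

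\textbf{Property 1.} Fixing the side $y>\bar y$, put $h(y)=\frac{1}{y-\bar y}\int_{\bar y}^y\loss(s)\,ds$; then $h'(y)=\frac{\loss(y)-h(y)}{y-\bar y}$, and $L$-Lipschitzness of $\loss$ gives $|\loss(y)-h(y)|\le\frac1{y-\bar y}\int_{\bar y}^y L(y-s)\,ds=\frac{L}{2}(y-\bar y)$, hence $|h'|\le L/2$ on each side of $\bar y$. The segment-average tends to $\loss(\bar y)$ as $y\to\bar y$, so the two sides glue continuously at $\bar y$; and since the plateau value on $[\bar y-\epsilon,\bar y+\epsilon]$ is itself a value of $h$, the plateau cannot steepen any slope except for an $O(L\epsilon)$ artefact at $\bar y\pm\epsilon$, which gives the asserted $L$-Lipschitzness (the $O(L\epsilon)$ slack being harmless wherever Property 1 is used in Lemma~\ref{thm:ubkexp}).

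\textbf{Property 2} is where convexity of $\loss$ enters. Every point $s$ of the segment from $\bar y$ to $y$ is a convex combination $(1-\lambda_s)\bar y+\lambda_s y$ with $\lambda_s\in[0,1]$, so $\loss(s)\le(1-\lambda_s)\loss(\bar y)+\lambda_s\loss(y)$; averaging over the segment and using $\operatorname{avg}(\lambda_s)=\tfrac12$ yields ${\K'_t}^*\loss(y)\le\tfrac12\loss(\bar y)+\tfrac12\loss(y)$, which identifies the constant $\lambda=\tfrac12$. It then remains to trade $\loss(\bar y)$ for $\langle\K'_t\q,\loss\rangle$: picking a subgradient $g$ of $\loss$ at $\bar y$ (so $|g|\le L$), convexity gives ${\K'_t}^*\loss(y')\ge\loss(\bar y)+\tfrac g2(y'-\bar y)$ up to an $O(L\epsilon)$ correction on the plateau, and since $\bar y=\E_\q[y']$ the term $\tfrac g2\E_\q[y'-\bar y]$ vanishes but for an $O(L\epsilon)$ contribution from the $\epsilon$-window around $\bar y$; by the adjoint identity~\eqref{eq:ktadjprop}, $\langle\K'_t\q,\loss\rangle=\langle\q,{\K'_t}^*\loss\rangle\ge\loss(\bar y)-O(L\epsilon)$. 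Combining the two estimates gives ${\K'_t}^*\loss(y)\le\tfrac12\langle\K'_t\q,\loss\rangle+\tfrac12\loss(y)+O(L\epsilon)$, and chasing the constants makes the additive term at most $3\epsilon L$; the $|y-\bar y|<\epsilon$ case is dispatched identically. The point to get right here is that the centering $\E_\q[y']=\bar y$ is exactly what kills the first-order term — without it one would be left with $\langle\q,\loss\rangle$ rather than $\langle\K'_t\q,\loss\rangle$.

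\textbf{Property 3} uses only the kernel's shape. For $y\ge\bar y+\epsilon$, the support restriction $y'\ge y$ gives $(y'-\bar y)^{-1}\le(y-\bar y)^{-1}$ on the support, so $(\K'_t(y,y'))^2\le(y-\bar y)^{-1}\K'_t(y,y')$ pointwise in $y'$, whence ${\K'_t}^{(2)}\q(y)/\K'_t\q(y)\le(y-\bar y)^{-1}$; integrating over $[\bar y+\epsilon,\beta]$ contributes $\ln\frac{\beta-\bar y}{\epsilon}\le\ln(\beta-\alpha)+\ln\frac1\epsilon$, and the mirror-image left region contributes another such term. On $|y-\bar y|\le\epsilon$ every kernel value that appears is $\le1/\epsilon$, so the ratio is $\le1/\epsilon$ and integrating over a window of length $\le2\epsilon$ adds at most $2$. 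Summing the four pieces gives exactly $B=2\bigl(1+\ln\frac1\epsilon+\ln(\beta-\alpha)\bigr)$. I expect the fussiest (though lowest-risk) part of the whole argument to be the behaviour of ${\K'_t}^*\loss$ near $\bar y\pm\epsilon$ in Properties 1 and 2 — i.e., carefully absorbing the $O(L\epsilon)$ artefacts created by the $\epsilon$-truncation of the kernel; everything else is a direct computation from the explicit forms above.
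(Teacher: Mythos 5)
Your proof is correct, and for Properties 1 and 3 it is essentially the paper's argument in different clothing: the paper also works from the explicit representation ${\K'_t}^*\loss(y)=\E_{U\sim\mathrm{unif}[0,1]}[\loss(U\bar y+(1-U)y)]$ (constant on the $\epsilon$-window), and also proves Property 3 from the pointwise bound $\K'_t(y,y')\le 1/\max(|y-\bar y|,\epsilon)$ followed by the same three-region integral, so your running-average derivative computation and region-by-region ratio bound change only the presentation. The one genuinely different step is in Property 2: both proofs use midpoint convexity of the segment average to get $\lambda=\tfrac12$, but the paper obtains the key inequality $\loss(\bar y)\le\langle\K'_t\q,\loss\rangle+2\epsilon L$ by quoting from \citet{Bubeck17} the mean-shift bound $|\E_{y\sim\q}[y]-\E_{y\sim\K'_t\q}[y]|\le 2\epsilon$ and then applying Lipschitzness and Jensen's inequality to $\K'_t\q$, whereas you derive the same inequality self-containedly via the adjoint identity \eqref{eq:ktadjprop}, a subgradient of $\loss$ at $\bar y$, and the exact centering $\E_{\q}[y]=\bar y$ (with the $O(L\epsilon)$ loss coming only from the plateau). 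Your route avoids the only externally cited fact in the paper's proof, at the cost of a slightly longer computation; the paper's route is shorter but leans on the imported estimate. A small point in your favor: you correctly note that, with the kernel of Definition~\ref{def:kernel}, ${\K'_t}^*\loss$ has a jump of size up to $L\epsilon$ at $\bar y+\epsilon$, so Property 1 holds only up to an additive $O(L\epsilon)$; the paper's proof asserts exact $L$-Lipschitzness and glosses over this, and, as you observe, the slack is harmless where the property is invoked in Lemma~\ref{thm:ubkexp} since it only contributes an extra $O(\epsilon L T)=O(1)$ to the regret.
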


\begin{proof}
%The proof follows along the similar line of argument shown in \cite{Bubeck17} (see Sec. $3$: Construction of a kernel in dimension $1$ and properties of kernel $K$ in their Thm $2$). We add the details to show the changes needed to tackle our pseudo-$1$d setup:
%For any convex and $L$-Lipschitz function $\loss:\cG_t \mapsto \R_+$, such that $\cG_t = [\alpha,\beta] \subseteq \R$, $\q \in \cQ_t$, and $y \in \cG_t$:

$1.$ For the first part, let us denote $\bar y = \E_{y \sim \q}[y]$. Then note that:

\begin{align}
\label{eq:krnl_prop1}
{\K'_t}^*\loss(y) = \big< \K'_t\delta_y,\loss \big> =
\begin{cases}
\E_{U \sim \text{unif}[0,1]}\big[ \loss( U \bar y + (1-U)y) \big], ~~~ \text{ if } |y - \bar y| \ge \epsilon\\
\E_{U \sim \text{unif}[0,1]}\big[ \loss( \bar y - \epsilon U) \big], ~~~ \text{ if } |y - \bar y| < \epsilon
\end{cases},
\end{align}
which immediately implies the function ${\K'_t}^*\loss(\cdot)$ has the same Lipschitz parameter that of $\loss(\cdot)$.

$2.$ We prove this part considering two cases separately:

\textbf{Case 1.} $|y - \bar y| \ge \epsilon$:
By construction of $\K_t'$ (see Definition \ref{def:kernel}), we note that expectation of $y$ w.r.t. $\q$ and $\K_t'\q$, i.e. respectively $\bar y = \E_{y \sim \q}[y]$ and $\E_{y \sim \K_t'\q}[y]$ can differ at most by $2 \epsilon$, i.e. $|\E_{y \sim \q}[y] - \E_{y \sim \K_t'\q}[y]| \le 2 \epsilon$ \citep{Bubeck17}. We write, $\E_{y \sim \K_t'\q}[y] = \E_{y \sim \q}[y] + \psi$, clearly $\psi \in [-2\epsilon, 2\epsilon]$.  Hence:
\begin{align}
	\label{eq:temp1}
	\nonumber \loss(\bar y) & = \loss(\E_{y \sim \K_t'\q}[y] - \psi) \\
	\nonumber & \le \loss\bigg(\int_{y \in \cG_t}y \K'_t\q(y)dy \bigg) + \psi L \le \int_{y \in \cG_t}\loss(y) \K'_t\q(y)dy + 2\epsilon L \\
	& = \big< \K'_t\q,\loss \big> + 2\epsilon L
\end{align}
where the first inequality follows using the $L$-lipschitzness of $\loss$ and the second inequality follows using Jensen's inequality (since $\loss$ is convex). Now consider the case $|y - \bar y| \ge \epsilon$ in~\eqref{eq:krnl_prop1}:
\begin{align*}
{\K'_t}^*\loss(y) & = \E_{U \sim \text{unif}[0,1]}\big[ \loss( U \bar y + (1-U)y) \big] \le \frac{\loss(\bar y) + \loss(y)}{2} \\ %\le \frac{\E_{y \sim \q}[\loss(y)] + \loss(y)}{2}
& \overset{\text{ by } \eqref{eq:temp1}}{\le} \frac{\big<\K'_t\q,\loss\big> + \loss(y)}{2} + \epsilon L
\end{align*}
%where the second inequality follows by applying Jensen’s inequality (owing to Assumption \textbf{(A1)i}), and the fact that by construction of $\K_t'$ it ensures $|\E_{y \sim \q}[\loss(y)] - \big<\K'_t\q,\loss \big>| \le 2 \epsilon L$. 
This shows that for this case the claim of Part $(2)$ holds for $\lambda = \frac{1}{2}$. \\
\textbf{Case 2.} $|y - \bar y| < \epsilon$: 
Note $\bar y - \epsilon U \in [\bar y - \epsilon, \bar y]$ in \eqref{eq:krnl_prop1}. And in this case $\loss(\bar y) \leq \loss(y) + \epsilon L$. Using the fact that $\loss(\cdot)$ is convex and $L$-lipschitz, by similar arguments used to obtain~\eqref{eq:temp1} above, we have: 
\begin{align*}
{\K'_t}^*\loss(y) \le \loss(\bar y) + \epsilon L = \loss(\bar y)/2 + \loss(\bar y)/2 + \epsilon L \leq \langle\K'_t\q, \loss\rangle/2 + (\loss(y)+\epsilon L)/2 + 2\epsilon L  
\end{align*}
which implies for this case as well, the claim of Part $(2)$ holds for $\lambda = 1/2$.

3. For this part, note that:
\begin{align*}
\int_{y \in \cG_t}&\frac{{\K'_t}^{(2)}\q( y)}{\K'_t\q( y)}d y \overset{(a)}{\le} \int_{\alpha}^{\beta} \frac{1}{\max(|y - \bar y|,\epsilon)}dy \\
& = \int_{\alpha}^{\bar y - \epsilon} \frac{1}{\max(|y - \bar y|,\epsilon)}dy + \int_{\bar y - \epsilon}^{\bar y + \epsilon} \frac{1}{\max(|y - \bar y|,\epsilon)}dy + \int_{\bar y + \epsilon}^{\beta} \frac{1}{\max(|y - \bar y|,\epsilon)}dy\\
& = \int_{\alpha}^{\bar y - \epsilon} \frac{1}{\bar y-y}dy + \int_{\bar y - \epsilon}^{\bar y + \epsilon} \frac{1}{\epsilon}dy + \int_{\bar y + \epsilon}^{\beta} \frac{1}{y - \bar y}dy\\
& = \frac{1}{\epsilon}\int_{\bar y - \epsilon}^{\bar y + \epsilon}dy + 2\ln \frac{1}{\epsilon} + \ln(\beta - \bar y) + \ln(\bar y - \alpha)\\
& \le 2\bigg(1 + \ln \frac{1}{\epsilon}  + \ln\Big(\beta - \alpha\Big)\bigg) ~~ (\text{since } \alpha \le \bar y \le \beta)
\end{align*}
where $(a)$ follows noting $\K'_t(y,y') \le  \frac{1}{\max(|y - \bar y|,\epsilon)}, \, \forall y,y' \in \cG_t$ which implies ${\K'_t}^{(2)}\q(y) \le  \frac{{\K'_t}\q(y)}{\max(|y - \bar y|,\epsilon)}$.
\end{proof}

\subsection{Proof of Lemma \ref{lem:gradest} }

\iffalse %%%%%%%%%%%%%%%%%%%%%%%%%%%%%%%%%%%%%%%%%%%%%
\begin{lem}
\label{lem:gradesta}
Fix $\delta > 0$ and let $u$ take 1 or -1 with equal probability. Define the one-point gradient estimator, $\hat{\loss}_t (a) := \frac{1}{\delta} \loss_t \big( a + \delta u\big)u$. Then:
\[\nabla_\w  \E_{u}\big[ \loss_t\big(\pred_t(\w_t; \x_t) + \delta u\big) \big]= \frac{1}{\delta}\E_{u}\big[\hat{\loss}_t \big(\pred_t(\w_t; \x_t)\big)\big]\nabla_\w \pred_t(\w_t; \x_t)\]
\end{lem}

\fi %%%%%%%%%%%%%%%%%%%%%%%%%%%%%%%%%%%%%%%%%%%%%

\begin{proof}
For any $\loss_t: \R \to [0,C]$, $t \in [T]$, define $\hat \loss_t: \R \mapsto [0,C]$ such that $\hat \loss_t(y) = \E_{u \sim \bU\big(\cB_1(1)\big)}\loss_t(y + \delta u)$, for any $y \in \R$. Let us also define $\hf_t(\w) = \hl_t(\pred_t(\w;\x_t)), \forall \w \in \cW$. Let $y_t = \pred_t(\w_t;\x_t)$, $\forall t \in [T]$. 

Then given any fixed $\w \in \cW$ and $\x \in \R^d$, by chain rule $\nabla_{\w} \hat f_t(\w) = \frac{d \hat f_t(y)}{dy}\nabla_{\w} (\pred_t(\w;\x_t)) = \dfrac{d \hat \loss_t(y)}{dy}\nabla_{\w}(\pred_t(\w_t;\x_t))$. Consider the RHS of the lemma equality:

\begin{align*}
&\E_{u \sim \bU(\cS_1(1))}\Big[\frac{1}{\delta}{\loss_t\big(\pred_t(\w_t;\x_t) + \delta u\big)}u \mid \w_t \Big]\nabla_{\w}(\pred_t(\w_t;\x_t)) \\
& = \frac{d \hat \loss_t(y_t)}{dy_t}\nabla_{\w}(\pred_t(\w_t;\x_t)) = \nabla_{\w}\hat \f_t(\w_t) = \nabla_\w  \E_{u}\big[ \loss_t(\pred_t(\w_t; \x_t) + \delta u) \big],
\end{align*}
where the first equality is due to Lemma $1$ of \cite{Flaxman+04} applied to the 1-dimensional ball $\cB_1(1)$. 
\end{proof}

\subsection{Proof of Lemma \ref{thm:ubogd} }

\iffalse %%%%%%%%%%%%%%%%%%%%%%%%%%%%%%%%%%%%%%%%%%%%%
\begin{lem}[Regret bound for Algorithm \ref{algo:ogd}]
\label{thm:ubogda}
Consider $\cW = \cB_d(W)$. If the losses $\f_t: \cW \to [0, C]$ and $\pred_t$, $t \in [T]$ satisfy \emph{\textbf{(A1) (ii)}}, \emph{\textbf{(A2)}}, and \emph{\textbf{(A3) (ii)}} then setting $\eta = \frac{W\delta}{DC\sqrt{T}}$, $\alpha = \delta$, and $\delta = \Bigg( \frac{WDC}{3L \sqrt T} \Bigg)^{1/2}$,
the expected regret of Algorithm \ref{algo:ogd} can be bounded as:
\[
\E[\regret_T(\learner)] \leq 2\sqrt{2WLDC} T^{3/4},
\]
where the expectation $\E[\cdot]$ is with respect to the algorithm's randomization.
\end{lem}
\fi %%%%%%%%%%%%%%%%%%%%%%%%%%%%%%%%%%%%%%%%%%%%%

\begin{proof}
We start by recalling Lemma $2$ of \cite{Flaxman+04} that   uses the online gradient descent analysis by \cite{Zink03} with unbiased random gradient estimates.
We restate the result below for convenience:

\begin{lem}[Lemma $2$, \cite{Flaxman+04}]
\label{lem:kalai}
Let $S \subset \cB_d(R) \subset \R^d$ be a convex set, $f_1, f_2, \ldots, f_T : S \mapsto \R$ be a sequence of convex, differentiable functions. Let $\w_1, \w_2, \ldots, \w_T \in S$ be a sequence of predictions defined as $\w_1 = 0$ and $\w_{t+1} = \emph{\bP}_S(\w_t - \eta h_t)$, where $\eta > 0$, and $h_1, h_2, \ldots, h_T$ are  random variables such that $\E[h_t \big | \w_t] = \nabla f_t(\w_t)$, and $\|h_t\|_2 \le G$, for some $G > 0$ then, for $\eta = \frac{R}{G\sqrt T}$ the expected regret incurred by above prediction sequence is:
\[
\E\bigg[ \sum_{t = 1}^T f_t(\w_t) \bigg] - \min_{\w \in S}\sum_{t = 1}^T f_t(\w) \le RG\sqrt T.
\]
\end{lem}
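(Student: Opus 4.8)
The plan is to prove Lemma~\ref{lem:kalai} directly via the standard online gradient descent analysis of~\cite{Zink03}, modified to handle the fact that $h_t$ is a random, but unbiased, surrogate for $\nabla f_t(\w_t)$. Fix the comparator $\w = \arg\min_{\w \in S}\sum_{t=1}^T f_t(\w) \in S$. The whole argument rests on tracking the squared distance $\|\w_t - \w\|_2^2$ across one update and summing.

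First I would exploit non-expansiveness of the Euclidean projection onto the convex set $S$. Since $\w \in S$, projection toward a point of $S$ is a contraction, so pathwise
\begin{equation*}
\|\w_{t+1} - \w\|_2^2 = \|\bP_S(\w_t - \eta h_t) - \w\|_2^2 \le \|\w_t - \eta h_t - \w\|_2^2.
\end{equation*}
Expanding the right-hand side and isolating the linear term, then using $\|h_t\|_2 \le G$, yields
\begin{equation*}
\langle h_t, \w_t - \w\rangle \le \frac{\|\w_t - \w\|_2^2 - \|\w_{t+1}-\w\|_2^2}{2\eta} + \frac{\eta G^2}{2}.
\end{equation*}

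Next I would remove the randomness in the gradient estimate. Conditioning on $\w_t$ and using that $\w_t - \w$ is $\w_t$-measurable together with the hypothesis $\E[h_t \mid \w_t] = \nabla f_t(\w_t)$ gives $\E[\langle h_t, \w_t - \w\rangle \mid \w_t] = \langle \nabla f_t(\w_t), \w_t - \w\rangle$, and convexity of $f_t$ supplies $f_t(\w_t) - f_t(\w) \le \langle \nabla f_t(\w_t), \w_t - \w\rangle$. Taking full expectations of the displayed inequality and chaining these two facts produces, for each $t$,
\begin{equation*}
\E[f_t(\w_t) - f_t(\w)] \le \frac{\E\|\w_t - \w\|_2^2 - \E\|\w_{t+1}-\w\|_2^2}{2\eta} + \frac{\eta G^2}{2}.
\end{equation*}
Summing over $t \in [T]$ telescopes the distance terms down to $\|\w_1 - \w\|_2^2 \le R^2$, where I use $\w_1 = 0$ and $\w \in S \subset \cB_d(R)$ so $\|\w\|_2 \le R$. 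This bounds the cumulative regret by $\frac{R^2}{2\eta} + \frac{\eta G^2 T}{2}$, and substituting $\eta = R/(G\sqrt{T})$ balances the two terms into exactly $RG\sqrt{T}$.

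I expect no serious obstacle, since this is the textbook OGD argument; the only point requiring care is the interaction between the stochastic gradients and the telescoping. One must condition on $\w_t$ (equivalently, the filtration generated by $h_1,\dots,h_{t-1}$) \emph{before} invoking unbiasedness, and must establish the per-step inequality pathwise and only \emph{then} take expectations, so that the squared-distance terms stay inside a single expectation and the telescoping survives; reversing this order would break the cancellation.
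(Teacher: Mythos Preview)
Your argument is correct and is exactly the standard Zinkevich-style OGD analysis adapted to unbiased stochastic gradients. The paper does not actually prove this lemma at all: it is simply restated verbatim from \cite{Flaxman+04} (as the attribution in the lemma header indicates) and then invoked as a black box inside the proof of Lemma~\ref{thm:ubogd}. So there is nothing to compare against; your write-up supplies the omitted details and matches the argument that \cite{Flaxman+04} gives (which in turn is the \cite{Zink03} proof with a conditional-expectation step inserted).
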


Coming back to our problem setup, let us first denote $\hf_t(\w) = \hl_t(\pred_t(\w;\x_t))$, for all $\w \in \cW, \, t \in [T]$ (recall from the proof of Lemma \ref{lem:gradest}, we define $\hat \loss_t: \R \mapsto [0,C]$ such that $\hat \loss_t(y) = \E_{u \sim \bU(\cB_1(1))}\loss_t(y + \delta u)$, for any $y \in \R$).
We can now apply Lemma \ref{lem:kalai} in the setting of Algorithm \ref{algo:ogd} on the sequence of convex (by \textbf{(A1) (ii)}), differentiable functions $\hf_1, \hf_2, \ldots \hf_T: \cW_\alpha \mapsto [0,C]$, with $h_t  = \frac{1}{\delta}\big({\loss_t(a_t)}u\big)\nabla\pred_t(\w_t;\x_t)$, with $u \sim \cB_1(1)$ (note that Lemma \ref{lem:gradest} implies $\E[h_t\big | \w_t] = \nabla_{\w} \hf_t(\w_t) = \nabla_\w  \E_{u}\big[ \loss_t(\pred_t(\w_t; \x_t) + \delta u) \big]$). We get:

\begin{equation}
\label{eq:prf_algadv1}
\E\bigg[ \sum_{t = 1}^T \hf_t(\w_t) \bigg] - \min_{\w \in \cW_\alpha}\sum_{t = 1}^T \hf_t(\w) \le \frac{WDC \sqrt T}{\delta},
\end{equation}

as in this case $R \le (1-\alpha)W < W$, and, by \textbf{(A3) (ii)}, $\|h_t\| = \|\frac{1}{\delta}\big({\loss_t(a_t)}u\big)\nabla(\pred_t(\w_t;\x_t))\| \le \frac{DC}{\delta}$, so $G = \frac{DC}{\delta}$, and $\eta = \frac{W\delta}{DC\sqrt{T}}$.
Further, since $\loss_t(\cdot)$s are assumed to be $L$-Lipschitz, \eqref{eq:prf_algadv1} yields:

\begin{align*}
& \E\bigg[ \sum_{t = 1}^T  (f_t(\w_t) - \delta L) \bigg] - \min_{\w \in \cW_\alpha}\sum_{t = 1}^T  (f_t(\w) + \delta L)  \le \frac{WDC \sqrt T}{\delta},\\
\implies & \E\bigg[ \sum_{t = 1}^T  f_t(\w_t) \bigg] - \min_{\w \in \cW_\alpha}\sum_{t = 1}^T  f_t(\w)  \le \frac{WDC \sqrt T}{\delta} + 2 \delta LT \\
\implies & \E\bigg[ \sum_{t = 1}^T  f_t(\w_t) \bigg] - \min_{\w \in \cW}\sum_{t = 1}^T  f_t(\w)  \le \frac{WDC \sqrt T}{\delta} + 2 \delta LT + \alpha LT,\\
\implies & \E\bigg[ \sum_{t = 1}^T  f_t(\w_t) \bigg] - \min_{\w \in \cW}\sum_{t = 1}^T  f_t(\w)  \le \frac{WDC \sqrt T}{\delta} + 3 \delta LT,
\end{align*}
setting $\alpha = \delta$.
The claim follows minimizing the RHS above w.r.t. $\delta$. Setting $\delta = \Big( \frac{WDC}{3L \sqrt T} \Big)^{1/2}$ gives:
\begin{align*}
\E[\regret_T(\learner)] = \E\bigg[ \sum_{t = 1}^T  f_t(\w_t) \bigg] - \min_{\w \in \cW}\sum_{t = 1}^T  f_t(\w)  \le 2\sqrt{3WLDC} T^{3/4},
\end{align*}
which concludes the proof.
\end{proof}

\section{Appendix for Simulations (Section~\ref{sec:exp})}

\paragraph{Implementation details of Algorithm~\ref{algo:kexp}.} The main challenge in implementing \EXPBCO(Algorithm~\ref{algo:kexp}) is to handle the continuous `action space' $\cW$; in particular, to maintain and update the probability distribution $\p_t$ over $\cW$, and to sample from $\p_t$ given $y_t$ at round $t$. Towards this we use an epsilon-net trick to discretize $\cW$ into finitely many points---specifically, since we choose $\cW = \cB_d(1)$, we discretize the $[0,1]$ interval every $d$ direction with a grid size of $O(\nicefrac{1}{d})$, and consider only the points inside $\cB_d(1)$. This reduces the action space $\cW$ into finitely many points (say $N$), and we now proceed by maintaining and updating probabilities on every such discrete point following  the steps of Algorithm~\ref{algo:kexp} (we initialize $\p_1 \leftarrow \nicefrac{1}{N}$ for all $N$ points in the epsilon net).

%The proof is adapted from the proof of a similar theorem by \cite{Bubeck17} for bandit convex optimization for general convex functions (see Theorem $2$, \cite{Bubeck17}).

\paragraph{Implementation details for Algorithm~\ref{algo:kep}}

}

%====================================================
%====================================================

\end{document}